\newcommand\eqdef{\ensuremath{\stackrel{\rm def}{=}}} 
\newcommand\I{\mathbb I}
\theoremstyle{plain}
\newtheorem{prop}{Proposition}
\newtheorem{coro}[prop]{Corollary}
\newtheorem{lemm}[prop]{Lemma}
\newtheorem{theo}[prop]{Theorem}
\theoremstyle{definition}
\theoremstyle{remark}
\date{\today}
\newcommand{\too}{\widetilde{\oo}}
\newcommand{\Binom}{\operatorname{Binom}}
\newcommand{\Pois}{\operatorname{Poisson}}
\newcommand{\Mult}{\operatorname{Multinom}}
\newcommand{\tvarepsilon}{\tilde{\varepsilon}}
\newcommand{\ts}{\tilde{s}}
\newcommand{\tL}{\widetilde{L}}
\newcommand{\teta}{\tilde{\eta}}
\newcommand{\CBE}{C_\text{BE}}
\renewcommand{\hw}{\widehat w}
\newcommand{\taui}{\tau^{(i)}}
\newcommand{\sectshrink}{\vspace{-1mm}}
\newcommand{\figshrink}{\vspace{-4mm}}
\title{Altitude Training: \\ Strong Bounds for Single-Layer Dropout}
\renewcommand*{\@fnsymbol}[1]{\ensuremath{\ifcase#1\or  \or \dagger\or \ddagger\or
   \mathsection\or \mathparagraph\or \|\or **\or \dagger\dagger
   \or \ddagger\ddagger \else\@ctrerr\fi}}
\newcommand{\thanksfoot}{\thanks{S. Wager and W. Fithian are supported by a B.C. and E.J. Eaves Stanford Graduate Fellowship and NSF VIGRE grant DMS--0502385 respectively.}}
\author{
Stefan Wager$^*$, \
William Fithian$^*$, \
Sida Wang$^{\dagger}$, \
\textmd{and} \ Percy Liang$^{*, \dagger}$\thanksfoot \\
Departments of Statistics$^{*}$ and Computer Science$^{\dagger}$ \\
Stanford University, Stanford, CA-94305, USA \\
\texttt{\{swager, wfithian\}@stanford.edu}, \texttt{\{sidaw, pliang\}@cs.stanford.edu}  \\
}
\begin{document}

\maketitle

\begin{abstract}
Dropout training, originally designed for deep neural networks, has been successful on high-dimensional single-layer natural language tasks. This paper proposes a theoretical explanation for this phenomenon: we show that, under a generative Poisson topic model with long documents, dropout training improves the exponent in the generalization bound for empirical risk minimization. Dropout achieves this gain much like a marathon runner who practices at altitude: once a classifier learns to perform reasonably well on training examples that have been artificially corrupted by dropout, it will do very well on the uncorrupted test set. We also show that, under similar conditions, dropout preserves the Bayes decision boundary and should therefore induce minimal bias in high dimensions.
\end{abstract}

\section{Introduction}

Dropout training \cite{hinton2012improving} is an increasingly
popular method for regularizing
learning algorithms.  Dropout is most commonly
used for regularizing deep neural networks
\cite{ba2013adaptive,goodfellow2013maxout,krizhevsky2012imagenet,wan2013regularization},
but it has also been found to improve the performance of
logistic regression and other single-layer models
for natural language tasks such as document classification and named entity recognition
\cite{wager2013dropout,wang2013fast,wang2013feature}. For
single-layer linear models, learning with dropout is equivalent
to using ``blankout noise'' \cite{van2013learning}.

The goal of this paper is to gain a better theoretical understanding of why
dropout regularization works well for natural language tasks.
We focus on the task of document classification using linear classifiers
where data comes from a generative Poisson topic model.
In this setting, dropout effectively deletes random
words from a document during training;
this corruption makes the training examples harder.
A classifier that \emph{is} able to fit the training data
will therefore receive an accuracy boost at test time on
the much easier uncorrupted examples.
An apt analogy is altitude training,
where athletes practice in more difficult situations than they compete in.
Importantly, our analysis does not rely on dropout merely creating \emph{more}
pseudo-examples for training, but rather on dropout
creating \emph{more challenging} training examples.
Somewhat paradoxically, we show that removing information from
training examples can induce a classifier that performs better at test time.

\paragraph{Main Result}
Consider training the zero-one loss empirical risk minimizer (ERM) using dropout,
where each word is independently removed with probability $\delta\in (0, \, 1)$.
For a class of Poisson generative topic
models, we show that dropout gives rise to what we call the \emph{altitude training phenomenon}:
dropout improves the excess risk of the ERM by multiplying the exponent
in its decay rate by $1/(1 - \delta)$. This improvement comes
at the cost of an additive term of $O(1/\sqrt{\lambda})$, where $\lambda$
is the average number of words per document. More formally, let $h^*$ and $\hath_\text{0}$ be the
expected and empirical risk minimizers, respectively;
let $h^*_\delta$ and $\hath_\delta$ be the corresponding quantities
for dropout training. Let $\Err(h)$ denote the error rate (on test examples) of $h$.
In Section~\ref{sec:genbound}, we show that:
\begin{equation}
\label{eq:drop_gen}
\underbrace{\Err\p{\hath_\delta} - \Err\p{h^*_\delta}}_\text{dropout excess risk} =
\too_P\p{
  {\underbrace{\p{\Err\p{\hath_\text{0}} - \Err\p{h^*}}}_\text{ERM excess risk}}
  ^{\frac{1}{1 - \delta}}
+ \frac{1}{\sqrt{\lambda}}},
\end{equation}
where $\too_P$ is a variant of big-$\oo$ in probability notation that suppresses logarithmic factors.
If $\lambda$ is large (we are classifying long documents rather than short snippets of text),
dropout considerably accelerates the decay rate of excess
risk. The bound \eqref{eq:drop_gen} holds for fixed choices of $\delta$.
The constants in the bound worsen
as $\delta$ approaches 1, and so we cannot get zero excess risk by sending $\delta$ to 1.

Our result is modular in that it converts upper bounds on the
ERM excess risk to upper bounds on the dropout excess risk.
For example, recall from classic VC theory that the ERM
excess risk is $\too_P(\sqrt{d/n})$, where $d$ is the number of features (vocabulary size)
and $n$ is the number of training examples. With dropout $\delta=0.5$, our result
\eqref{eq:drop_gen}
directly implies that the dropout excess risk is $\too_P(d/n + 1/\sqrt{\lambda})$.

The intuition behind the proof of \eqref{eq:drop_gen} is as follows:
when $\delta=0.5$,
we essentially train on half documents and test on whole documents.
By conditional independence properties of the generative topic model,
the classification score is roughly Gaussian
under a Berry-Esseen bound,
and the error rate is governed by the tails of the Gaussian.
Compared to half documents,
the coefficient of variation
of the classification score on whole documents (at test time)
is scaled down by $\sqrt{1-\delta}$ compared to half documents (at training time),
resulting in an exponential reduction in error.
The additive penalty of $1/\sqrt{\lambda}$ stems from the Berry-Esseen approximation.

Note that the bound \eqref{eq:drop_gen} only controls the dropout excess risk.
Even if dropout reduces the excess risk,
it may introduce a bias $\Err(h_\delta^*) - \Err(h^*)$,
and thus \eqref{eq:drop_gen} is useful only when this bias is small.
In Section~\ref{sec:geom},
we will show that the optimal Bayes decision boundary
is not affected by dropout under the Poisson topic model.
Bias is thus negligible  when the Bayes boundary
is close to linear.

It is instructive to compare our generalization bound to that of
Ng and Jordan \cite{ng2001discriminative}, who showed that the naive Bayes classifier
exploits a strong generative assumption---conditional
independence of the features given the label---to achieve
an excess risk of $\oo_P(\sqrt{(\log d)/n})$.
However, if the generative assumption is incorrect,
then naive Bayes can have a large bias.
Dropout enables us to cut excess risk without incurring as much bias.
In fact, naive Bayes is closely related to logistic regression trained using
an extreme form of dropout with
$\delta \rightarrow 1$. Training logistic regression with dropout rates from the range
$\delta \in (0, \, 1)$ thus gives a family of classifiers between
unregularized logistic regression and naive Bayes, allowing us to tune the
bias-variance tradeoff.

\sectshrink
\paragraph{Other perspectives on dropout}

In the general setting, dropout only improves
generalization by a \emph{multiplicative} factor.
McAllester \cite{mcallester2013pac} used the PAC-Bayes framework to prove a
generalization bound for dropout that decays as $1 - \delta$.
Moreover, provided that $\delta$ is not too close to 1,
dropout behaves similarly to an adaptive $\LII$ regularizer
with parameter $\delta/(1 - \delta)$
\cite{wager2013dropout,baldi2014dropout}, and at least in
linear regression such $\LII$ regularization improves
generalization error by a constant factor.
In contrast, by leveraging the conditional independence assumptions
of the topic model, we are able to improve the \emph{exponent} in the rate of
convergence of the empirical risk minimizer.

It is also possible to analyze dropout as an adaptive regularizer
\cite{wager2013dropout,van2013learning,globerson2006nightmare}:
in comparison with $\LII$ regularization, dropout
favors the use of rare features and encourages confident predictions. If we believe that
good document classification should produce confident predictions by
understanding rare words with Poisson-like occurrence patterns, then the
work on dropout as adaptive regularization and our generalization-based analysis
are two complementary explanations for the success of dropout in
natural language tasks.

\sectshrink
\section{Dropout Training for Topic Models}
\label{sec:dropout}
\sectshrink

In this section, we introduce \emph{binomial dropout}, a form of dropout
suitable for topic models, and the Poisson topic model,
on which all our analyses will be based.

\sectshrink
\paragraph{Binomial Dropout}
\label{sec:binom}

Suppose that we have a binary classification problem\footnote{Dropout training is known
to work well in practice for multi-class problems \cite{wang2013feature}.
For simplicity, however, we will restrict our theoretical analysis to a two-class setup.}
 with count features $x^{(i)} \in \{0,1,2,\ldots\}^d$ and labels $y^{(i)} \in \{0, \, 1\}$.
For example, $x^{(i)}_j$ is the number of times the $j$-th word
in our dictionary appears in the $i$-th document,
and $y^{(i)}$ is the label of the document.
Our goal is to train a weight vector $\hw$ that
classifies new examples with features $x$
via a linear decision rule $\hy = \I\{\hw \cdot x > 0\}$.
We start with the usual empirical risk minimizer:
\begin{equation}
  \hw_0 \eqdef \argmin_{w \in \R^d}\left\{\sum_{i = 1}^n \ell\left(w; \, x^{(i)}, \, y^{(i)}\right)\right\}
\end{equation}
for some loss function $\ell$ (we will analyze the zero-one loss but use logistic loss in experiments, e.g., \cite{ng2001discriminative,bartlett2006convexity,zhang2004statistical}).
Binomial dropout trains on perturbed
features $\tx^{(i)}$ instead of the original features $x^{(i)}$:
\begin{align}
\label{eq:dropout}
\hw_{\delta} \eqdef \argmin_w\left\{\sum_{i = 1}^n \EE{\ell\left(w;\, {\tilde{x}^{(i)}}, \, y^{(i)}\right)}\right\},
\where {\tilde{x}_j^{(i)}} = \Binom\p{x_j^{(i)}; 1 - \delta}.
\end{align}
In other words, during training, we randomly thin the $j$-th feature $x_j$ with binomial noise.
If $x_j$ counts the number of times the $j$-th word appears in the document,
then replacing $x_j$ with $\tx_j$ is equivalent to
independently deleting each occurrence of word $j$ with probability $\delta$.
Because we are only interested in the decision boundary,
we do not scale down the weight vector obtained by dropout by a factor
$1 - \delta$ as is often done \cite{hinton2012improving}.

Binomial dropout differs slightly from the usual definition of (blankout) dropout,
which alters the feature vector $x$ by setting random coordinates to 0
\cite{wager2013dropout,van2013learning,mcallester2013pac,baldi2014dropout}.
The reason we chose to study binomial rather than blankout dropout is that Poisson
random variables remain Poisson even after binomial thinning; this fact lets us streamline our analysis.
For rare words that appear once in the document,
the two types of dropout are equivalent.

\sectshrink
\paragraph{A Generative Poisson Topic Model}
\label{sec:gen}

Throughout our analysis, we assume that the data is drawn from a
Poisson topic model depicted in Figure \ref{fig:graph} and defined as follows.
Each document $i$ is assigned a label $y^{(i)}$ according to some Bernoulli distribution.
Then, given the label $y^{(i)}$, the document gets a topic $\taui \in \Theta$
from a distribution $\rho_{y^{(i)}}$.
Given the topic $\tau^{(i)}$, for every word $j$ in the vocabulary,
we generate its frequency $x^{(i)}_j$
according to $x^{(i)}_j \cond \taui \sim \Pois(\lambda_j^{(\taui)})$,
where $\lambda_j^{(\tau)} \in [0, \infty)$ is the expected number of times
word $j$ appears under topic $\tau$.
Note that $\|\lambda^{(\tau)}\|_1$ is the average length of a document with topic $\tau$.
Define
$\lambda \eqdef \min_{\tau \in \Theta} \|\lambda^{(\tau)}\|_1$
to be the shortest average document length across topics.
If $\Theta$ contains only two topics---one for each class---we get the naive Bayes model.
If $\Theta$ is the $(K-1)$-dimensional simplex where $\lambda^{(\tau)}$ is a
$\tau$-mixture over $K$ basis vectors, we get
the $K$-topic latent Dirichlet allocation \cite{blei2003latent}.\footnote{
In topic modeling, the vertices of the simplex $\Theta$ are
``topics'' and $\tau$ is a mixture of topics, whereas we call $\tau$ itself a topic.}

Note that although our generalization result relies on a generative model,
the actual learning algorithm is agnostic to it.
Our analysis shows that
dropout can take advantage of a generative structure
while remaining a discriminative procedure.
If we believed that a certain topic model held exactly and we knew the
number of topics,  we could try to fit the full generative model by EM. This,
however, could make us vulnerable to model misspecification.
In contrast, dropout benefits from
generative assumptions while remaining more robust to misspecification.

\sectshrink
\section{Altitude Training: Linking the Dropout and Data-Generating Measures}
\label{sec:link}
\sectshrink

Our goal is to understand the behavior of a classifier $\hath_\delta$ trained using dropout.
During dropout, the error of any classifier $h$ is characterized by two measures.
In the end, we are interested in the usual generalization error (expected risk) of $h$
where $x$ is drawn from the underlying \emph{data-generating measure}:
\begin{equation}
\label{eq:err}
\Err\p{h} \eqdef \PP{y \neq h(x)}.
\end{equation}
However, since dropout training works on the corrupted data $\tx$ (see \eqref{eq:dropout}),
in the limit of infinite data,
the dropout estimator will converge to the minimizer of the generalization error
with respect to the \emph{dropout measure} over $\tx$:
\begin{equation}
\label{eq:err_delta}
\Err_\delta\p{h} \eqdef \PP{y \neq h(\tx)}.
\end{equation}

The main difficulty in analyzing the generalization of dropout is that
classical theory tells us that the generalization error with respect to
the dropout measure will decrease as $n \rightarrow \infty$,
but we are interested in the original measure.
Thus, we need to bound $\Err$ in terms of $\Err_\delta$.
In this section, we show that the error on the original measure is actually much smaller
than the error on the dropout measure; we call this the \emph{altitude
training phenomenon}.

Under our generative model, the count features $x_j$ are conditionally independent
given the topic $\tau$.
We thus focus on a single fixed topic $\tau$
and establish the following theorem, which provides a per-topic analogue of
\eqref{eq:drop_gen}.
Section~\ref{sec:genbound} will then use this theorem to obtain our main result.

\begin{theo}
\label{theo:err}
Let $h$ be a binary linear classifier with weights $w$, and suppose that our features
are drawn from the Poisson generative model given topic $\tau$. Let $c_\tau$ be
the more likely label given $\tau$:
\begin{equation}
\label{eq:ltau}
c_\tau \eqdef \arg\max_{c \in \{0,1\}} \PP{y^{(i)} = c \cond \tau^{(i)} = \tau}.
\end{equation}
Let $\tvarepsilon_\tau$ be the
sub-optimal prediction rate in the dropout measure
\begin{equation}
\label{eq:baseline-err}
\tvarepsilon_\tau \eqdef \PP{\I\left\{w \cdot \tx^{(i)} > 0\right\} \neq c_\tau \cond \taui = \tau},
\end{equation}
where $\tx^{(i)}$ is an example thinned by binomial dropout \eqref{eq:dropout},
and $\mathbb{P}$ is taken over the data-generating process.
Let $\varepsilon_\tau$ be the sub-optimal prediction rate in the original measure
\begin{align}
\label{eq:dropout-err}
\varepsilon_\tau \eqdef \PP{\I\left\{w \cdot x^{(i)} > 0\right\}\neq c_\tau \cond \taui = \tau}.
\end{align}
Then:
\begin{align}
\label{eq:error-cut}
\varepsilon_\tau = \too \p{\tvarepsilon_\tau^{\frac{1}{1 - \delta}} + \sqrt{\Psi_\tau}},
\end{align}
where
$\Psi_\tau = {\max_j\left\{w_j^2\right\}} / {\sum_{j = 1}^d \lambda^{(\tau)}_j w_j^2}$,
and the constants in the bound depend only on $\delta$.
\end{theo}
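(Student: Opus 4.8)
The plan is to reduce the statement to a one-dimensional tail comparison for the classification score and then exploit the approximate Gaussianity of that score. Condition throughout on $\tau^{(i)} = \tau$, write $S \eqdef w \cdot x^{(i)}$ and $\widetilde S \eqdef w \cdot \tx^{(i)}$ for the scores on a whole and a thinned document, and set $\mu \eqdef w \cdot \lambda^{(\tau)}$ and $\sigma^2 \eqdef \sum_j w_j^2\,\lambda_j^{(\tau)}$. Flipping the sign of $w$ if needed, we may assume $c_\tau = 1$, so $\varepsilon_\tau = \PP{S \le 0 \cond \tau}$ and $\tvarepsilon_\tau = \PP{\widetilde S \le 0 \cond \tau}$. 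Since binomial thinning of a Poisson is Poisson, $\tx_j^{(i)} \cond \tau \sim \Pois\p{(1-\delta)\lambda_j^{(\tau)}}$, so $\widetilde S$ has mean $(1-\delta)\mu$ and variance $(1-\delta)\sigma^2$; thus the standardized gap to the decision boundary is $\mu/\sigma$ at test time versus $\sqrt{1-\delta}\,\mu/\sigma$ during training, an inflation by the factor $(1-\delta)^{-1/2}$. Several degenerate cases are immediate: if $\sigma = 0$ the score is deterministic; if $\mu \le 0$ then $\tvarepsilon_\tau$ is bounded below by a constant and the bound is trivial; and if $\Psi_\tau$ exceeds a suitable constant then $\sqrt{\Psi_\tau} \gtrsim 1 \ge \varepsilon_\tau$. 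So it suffices to treat the regime $\mu > 0$ with $\tvarepsilon_\tau$ and $\Psi_\tau$ small.

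It is worth noting why a purely combinatorial argument cannot work: for $\delta = \tfrac12$ one can write $S = \widetilde S_1 + \widetilde S_2$ with $\widetilde S_1, \widetilde S_2$ i.i.d.\ copies of the half-document score, but inclusion of events only gives $\varepsilon_\tau \ge \tvarepsilon_\tau^2$, the wrong direction, and no upper bound of the form $\varepsilon_\tau \lesssim \tvarepsilon_\tau^2$ holds without control on the \emph{shape} of the left tail of $\widetilde S$ (a near-symmetric $\widetilde S$ would have $\varepsilon_\tau \approx \tvarepsilon_\tau \approx \tfrac12$). The right tool is therefore a Berry--Esseen bound: applying the Berry--Esseen theorem for sums of independent, non-identically distributed summands to $S = \sum_j w_j x_j$ and to $\widetilde S = \sum_j w_j \tx_j$ gives
\begin{equation*}
\left|\varepsilon_\tau - \Phi\p{-\mu/\sigma}\right| \le C\,L, \qquad \left|\tvarepsilon_\tau - \Phi\p{-\sqrt{1-\delta}\,\mu/\sigma}\right| \le C\,\widetilde L,
\end{equation*}
with $C$ universal, $\Phi$ the standard normal CDF, $L = \sum_j |w_j|^3\,\EE{|x_j - \lambda_j^{(\tau)}|^3}\big/\sigma^3$, and $\widetilde L$ its thinned analogue. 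The key estimate is that $L, \widetilde L = \too\p{\sqrt{\Psi_\tau}}$ with constants depending only on $\delta$: bounding the third absolute central moment of a Poisson and using $\sum_j |w_j|^3 \lambda_j^{(\tau)} \le (\max_j |w_j|)\,\sigma^2$ gives $L \lesssim \max_j |w_j| / \sigma = \sqrt{\Psi_\tau}$, and thinning merely replaces $\lambda^{(\tau)}$ by $(1-\delta)\lambda^{(\tau)}$. I expect this moment bookkeeping to be the main technical obstacle: since $\EE{|\Pois(m) - m|^3}$ grows like $m^{3/2}$ rather than $m$ for large $m$, the clean identification with $\sqrt{\Psi_\tau}$ is cleanest in the natural topic-model regime where each $\lambda_j^{(\tau)}$ is bounded, and otherwise needs either that mild regularity or a logarithmic slack already absorbed by $\too$.

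Finally, I would chain the two inequalities. The second gives $\Phi\p{-\sqrt{1-\delta}\,\mu/\sigma} \le \tvarepsilon_\tau + C\widetilde L =: p$, hence $\mu/\sigma \ge (1-\delta)^{-1/2}\,\Phi^{-1}(1-p)$; substituting into the first and using monotonicity of $\Phi(-\cdot)$,
\begin{equation*}
\varepsilon_\tau \le \Phi\p{-(1-\delta)^{-1/2}\,\Phi^{-1}(1-p)} + C\,L.
\end{equation*}
Combining the Gaussian tail bound $\Phi(-t) \le e^{-t^2/2}$ with the near-matching lower estimate $\p{\Phi^{-1}(1-p)}^2 \ge 2\log(1/p) - O(\log\log(1/p))$ for small $p$ (the source of the suppressed logarithmic factors) yields $\Phi\p{-(1-\delta)^{-1/2}\,\Phi^{-1}(1-p)} = \too\p{p^{1/(1-\delta)}}$. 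Since $1/(1-\delta) \ge 1$, $p^{1/(1-\delta)} = (\tvarepsilon_\tau + C\widetilde L)^{1/(1-\delta)} \lesssim \tvarepsilon_\tau^{1/(1-\delta)} + \widetilde L^{1/(1-\delta)} \lesssim \tvarepsilon_\tau^{1/(1-\delta)} + \sqrt{\Psi_\tau}$ (using $\widetilde L \le 1$ in the non-trivial regime), and adding the $C\,L = \too\p{\sqrt{\Psi_\tau}}$ term gives $\varepsilon_\tau = \too\p{\tvarepsilon_\tau^{1/(1-\delta)} + \sqrt{\Psi_\tau}}$ with constants depending only on $\delta$, as claimed.
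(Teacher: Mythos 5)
Your overall architecture --- condition on the topic, establish Gaussian approximations for the scores $S = w\cdot x$ and $\widetilde S = w\cdot\tx$ via Berry--Esseen, and then chain the two tail bounds through an inequality of the form $\Phi\p{-t/\sqrt{1-\delta}} = \too\p{\Phi(-t)^{1/(1-\delta)}}$ --- is exactly the paper's, and your chaining step and handling of the degenerate cases are correct. The gap is in the one place you yourself flag: the claim that the Berry--Esseen error is $\too\p{\sqrt{\Psi_\tau}}$. Applying the non-identically-distributed Berry--Esseen theorem with one summand per \emph{feature} gives the Lyapunov ratio $L=\sum_j|w_j|^3\,\EE{|x_j-\lambda_j^{(\tau)}|^3}/\sigma^3$, and since $\EE{|\Pois(m)-m|^3}$ grows like $m^{3/2}$, this ratio is only bounded by $\max_j\{|w_j|(\lambda_j^{(\tau)})^{1/2}\}/\sigma$, which can be of order $1$ while $\sqrt{\Psi_\tau}$ tends to zero. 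The discrepancy is polynomial, not logarithmic: take $d=1$, $w_1=1$, $\lambda_1^{(\tau)}=\lambda$; then $\sqrt{\Psi_\tau}=\lambda^{-1/2}\to 0$, but your $L$ stays bounded away from $0$ because a single Poisson summand gives Berry--Esseen nothing to work with, even though $S$ is in fact nearly Gaussian. So neither ``logarithmic slack absorbed by $\too$'' nor an unstated boundedness assumption on the $\lambda_j^{(\tau)}$ (which the theorem does not make) rescues the estimate as written.

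The missing idea is to subdivide below the feature level. The paper's Lemma \ref{lemm:berry-esseen} conditions on the total word count $N=\sum_j x_j\sim\Pois\p{\|\lambda^{(\tau)}\|_1}$, under which the individual words are multinomial, writes $S=\sum_{k=1}^N W_k$ as a sum over single word occurrences, and approximates $N$ by $\Binom\p{m,\|\lambda^{(\tau)}\|_1/m}$ so that the classical i.i.d.\ Berry--Esseen theorem applies to $\sum_{k=1}^m W_kI_k$; passing to the limit yields an error of $4\,\EE{|W|^3}\,\EE{W^2}^{-3/2}\,\|\lambda^{(\tau)}\|_1^{-1/2}\le 4\sqrt{\Psi_\tau}$. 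Equivalently, you could exploit infinite divisibility: write each $x_j$ as a sum of $m$ i.i.d.\ $\Pois(\lambda_j^{(\tau)}/m)$ variables and let $m\to\infty$, whereupon the Lyapunov numerator converges to $\sum_j|w_j|^3\lambda_j^{(\tau)}$, and your own inequality $\sum_j|w_j|^3\lambda_j^{(\tau)}\le\max_j\{|w_j|\}\,\sigma^2$ then delivers exactly $\sqrt{\Psi_\tau}$. With that lemma in place, the rest of your argument goes through and matches the paper's proof.
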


Theorem \ref{theo:err} only provides us with a useful bound when the term
$\Psi_\tau$ is small. 
Whenever the largest $w^2_j$ is not much larger than the average $w^2_j$,
then $\sqrt{\Psi_\tau}$ scales as $O(1/\sqrt{\lambda})$,
where $\lambda$ is the average document length.
Thus, the bound \eqref{eq:error-cut}
is most useful for long documents.

\begin{figure}[t]
\figshrink
\begin{subfigure}[b]{0.5\columnwidth}
\begin{center}
\begin{tikzpicture}
\clip (-1.3,-1.5) rectangle (6.25, 3);
\tikzstyle{main}=[circle, minimum size = 9mm, thick, draw =black!80, node distance = 12.5mm]
\tikzstyle{connect}=[-latex, thick]
\tikzstyle{box}=[rectangle, draw=black!100]
  \node[main, fill=black!10] (theta) [label=below:$y$] { };
  \node[main] (z) [right=of theta,label=below:$\tau$] {};
  \node[main, fill = black!10] (w) [right=of z,label=below:$x$] { };
  \node[main] (beta) [above=of z,label=below:$\lambda$] { };
  \node[main] (pi) [above=of theta,label=below:$\rho$] { };
  \path
        (pi) edge [connect] (z)
        (theta) edge [connect] (z)
        (z) edge [connect] (w)
        (beta) edge [connect] (w);
  \node[rectangle, inner sep=0mm, fit= (z) (w),label=below right:$J$, xshift=11mm] {};
  \node[rectangle, inner sep=4.4mm,draw=black!100, fit= (w)] {};
  \node[rectangle, inner sep=3.6mm, fit= (z) (w) ,label=below right:$I$, xshift=11.5mm] {};
  \node[rectangle, inner sep=7.7mm, draw=black!100, fit = (theta) (z) (w)] {};
\end{tikzpicture}
\caption{Graphical representation of the Poisson topic model:
Given a document with label $y$, we draw a document topic $\tau$ from the multinomial distribution with probabilities $\rho_y$.
Then, we draw the words $x$ from the topic's Poisson distribution with mean $\lambda^{(\tau)}$.
Boxes indicate repeated observations, and greyed-out nodes are observed during training.}
\label{fig:graph}
\end{center}
\end{subfigure}
\hspace{0.02\columnwidth}
\begin{subfigure}[b]{0.45\columnwidth}
\centering
\includegraphics[width=0.8\columnwidth]{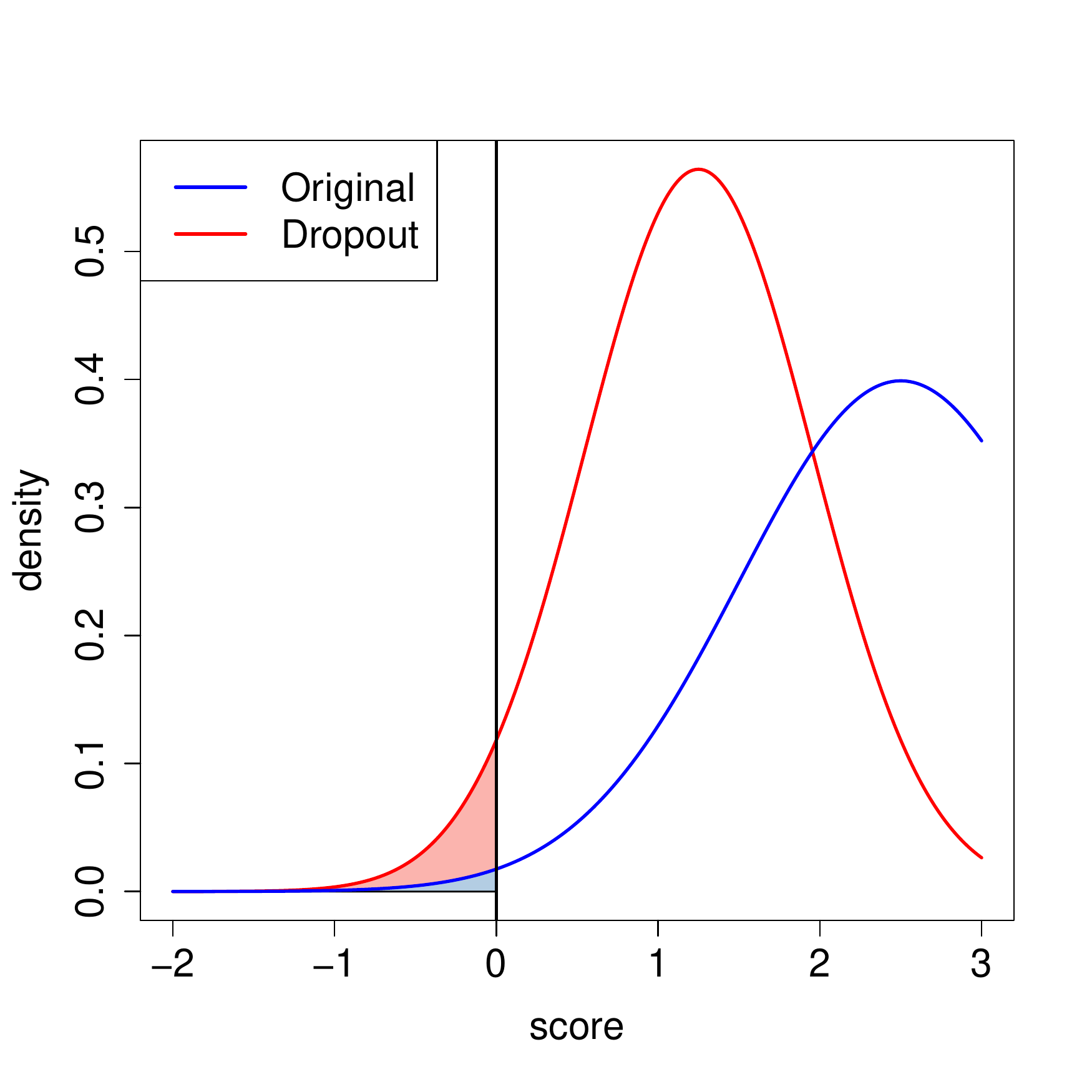}
\caption{
For a fixed classifier $w$,
the probabilities of error on an example drawn from the original and dropout measures
are governed by the tails of two Gaussians (shaded).
The dropout Gaussian has a larger coefficient of variation,
which means the error on the original measure (test) is much smaller
than the error on the dropout measure (train) \eqref{eq:approx2}.
In this example, $\mu = 2.5$, $\sigma = 1$, and $\delta = 0.5$.}
\label{fig:gauss_cmp}
\end{subfigure}
\caption{(a) Graphical model.  (b) The altitude training phenomenon.}
\label{fig:topic}
\figshrink
\end{figure}

\sectshrink
\paragraph{A Heuristic Proof of Theorem \ref{theo:err}.}

The proof of Theorem \ref{theo:err} is provided in the technical appendix.
Here, we provide a heuristic argument for intuition.
Given a fixed topic $\tau$, suppose that it is optimal to predict $c_\tau = 1$, so our
test error is
$\varepsilon_\tau = \PP{w \cdot x \leq 0 \cond \tau}.$
For long enough documents, by the central limit theorem,
the score $s \eqdef w \cdot x$ will be roughly Gaussian
$s \sim \nn\p{\mu_\tau, \, \sigma_\tau^2},$
where
$\mu_\tau = \sum_{j = 1}^d \lambda^{(\tau)}_j w_j$
and
$\sigma^2_\tau = \sum_{j = 1}^d \lambda^{(\tau)}_j w_j^2.$
This implies that
$ \varepsilon_\tau \approx \Phi\p{-{\mu_\tau}/{\sigma_\tau}},$
where $\Phi$ is the cumulative distribution function of the Gaussian.
Now, let $\ts \eqdef w \cdot \tx$ be the score on a dropout sample. Clearly,
$\EE{\ts} = (1 - \delta)\,\mu_\tau$ and $\Var{\ts} = (1 - \delta)\,\sigma^2_\tau$,
because the variance of a Poisson random variable scales with its mean.
Thus,
\begin{equation}
\label{eq:approx2}
\tvarepsilon_\tau
\approx \Phi\p{- {\sqrt{1 - \delta}}\, \frac{\mu_\tau}{\sigma_\tau}}
\approx \Phi\p{-\frac{\mu_\tau}{\sigma_\tau}}^{(1 - \delta)}
\approx \varepsilon_\tau^{(1 - \delta)}.
\end{equation}
Figure~\ref{fig:gauss_cmp} illustrates the relationship between the two Gaussians.
This explains the first term on the right-hand side
of \eqref{eq:error-cut}.
The extra error term $\sqrt{\Psi_\tau}$ arises from a Berry-Esseen bound
that approximates Poisson mixtures by Gaussian random variables.

\sectshrink
\section{A Generalization Bound for Dropout}
\label{sec:genbound}
\sectshrink

By setting up a bridge between the dropout measure and the original data-generating measure,
Theorem \ref{theo:err} provides a foundation for our analysis. It remains to
translate this result into a statement about the generalization error of dropout.
For this, we need to make a few assumptions.

Our first assumption is fundamental:
if the classification signal is concentrated among just a few features, then we
cannot expect dropout training to do well. The second and third assumptions,
which are more technical, guarantee that a classifier can only do well overall
if it does well on every topic; this lets us apply Theorem \ref{theo:err}. A more
general analysis that relaxes Assumptions 2 and 3 may be an interesting avenue
for future work.

\sectshrink
\paragraph{Assumption 1: well-balanced weights}

First, we need to assume that all the signal is not concentrated in a few features.
To make this intuition
formal, we say a linear classifier with weights $w$ is \emph{well-balanced}
if the following holds for each topic $\tau$:
\begin{equation}
\label{eq:well-conditioned}
\frac{{\max_j\left\{w_j^2\right\}}\,\sum_{j = 1}^d \lambda^{(\tau)}_j }{\sum_{j = 1}^d \lambda^{(\tau)}_j w_j^2} \leq \kappa \text{ for some } 0 < \kappa < \infty.
\end{equation}
For example, suppose each word was either useful ($|w_j| = 1$) or not ($w_j = 0$);
then $\kappa$ is the inverse expected fraction of words in a document that are useful.
In Theorem \ref{theo:main} we restrict the ERM to
well-balanced classifiers and assume that the expected risk minimizer
$h^*$ over all linear rules is also well-balanced.

\sectshrink
\paragraph{Assumption 2: discrete topics}
Second, we assume that there are a finite number $T$ of
topics, and that the available topics are not too rare or ambiguous:
the minimal probability of observing any topic $\tau$ is bounded below by
\begin{equation}
\label{eq:mintau}
\PP{\tau} \geq p_\text{\rm min} > 0,
\end{equation}
and that each topic-conditional probability is bounded away from $\frac12$ (random guessing):
\begin{equation}
\label{eq:minconf}
\left\lvert \PP{y^{(i)} = c \cond \taui = \tau} - \frac12 \right\rvert \geq \alpha > 0
\end{equation}
for all topics $\tau \in \{1, \, ..., \, T\}$. This assumption substantially
simplifies our arguments, allowing us to apply Theorem \ref{theo:err} to each topic
separately without technical overhead.

\sectshrink
\paragraph{Assumption 3: distinct topics}

Finally, as an extension of Assumption 2, we require that the topics be ``well separated.''
First, define $\Err_\text{min} = \mathbb{P}[y^{(i)} \neq
c_{\taui}]$,
where $c_\tau$ is the most likely label given topic $\tau$ \eqref{eq:ltau};
this is the error rate of the optimal decision rule that sees topic $\tau$.
We assume that the best linear rule $h^*_\delta$ satisfying \eqref{eq:well-conditioned}
is almost as good as always guessing the best label $c_\tau$ under the dropout measure:
\begin{equation}
\label{eq:shatter}
\Err_\delta\p{h^*_\delta} = \Err_\text{min} +\, \oo\p{\frac{1}{\sqrt{\lambda}}},
\end{equation}
where, as usual, $\lambda$ is a lower bound on the average document length.
If the dimension $d$ is larger than the number of topics $T$, this assumption is fairly weak:
the condition \eqref{eq:shatter} holds whenever the matrix $\Pi$ of topic
centers has full rank, and the minimum singular value of $\Pi$ is not too
small (see Proposition \ref{prop:shatter} in the Appendix for details).
This assumption is satisfied if the different topics can be separated from each other with a large margin.

Under Assumptions~1--3 we can turn Theorem~\ref{theo:err} into a statement
about generalization error.

\begin{theo}
\label{theo:main}
Suppose that our features $x$ are drawn
from the Poisson generative model (Figure \ref{fig:graph}),
and Assumptions 1--3 hold.
Define the excess risks of the dropout classifier $\hat h_\delta$ on the dropout and
data-generating measures, respectively:
\begin{equation}
\teta \eqdef \Err_\delta\p{\hath_\delta} - \Err_\delta\p{h^*_\delta}
\ \eqand \
\eta \eqdef \Err\p{\hath_\delta} - \Err\p{h^*_\delta}.
\end{equation}
Then, the altitude training phenomenon applies:
\begin{align}
\label{eq:droperr}
\eta = \too \p{\teta^{\frac{1}{1 - \delta}} + \frac{1}{\sqrt{\lambda}}}.
\end{align}
The above bound scales linearly in $p_\text{min}^{-1}$ and $\alpha^{-1}$;
the full dependence on $\delta$ is shown in the appendix. 
\end{theo}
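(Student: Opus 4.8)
The plan is to reduce Theorem~\ref{theo:main} to the per-topic bound of Theorem~\ref{theo:err} by slicing every error probability along the topic variable and exploiting the conditional independence $y \perp x \mid \tau$ built into the Poisson topic model ($y \to \tau \to x$). Set $q_\tau \eqdef \PP{y \neq c_\tau \cond \taui = \tau}$, so $q_\tau \le \tfrac12$ and, by Assumption~2, $1 - 2q_\tau \ge 2\alpha$. Conditioning on $\tau$ and using that $y$ and $x$ are independent given $\tau$, any linear rule $h$ satisfies $\PP{y \neq h(x) \cond \tau} = q_\tau(1 - \varepsilon_\tau(h)) + (1 - q_\tau)\varepsilon_\tau(h) = q_\tau + (1 - 2q_\tau)\varepsilon_\tau(h)$ with $\varepsilon_\tau(h)$ as in \eqref{eq:dropout-err}, and the identical computation on the dropout measure gives $\PP{y \neq h(\tx) \cond \tau} = q_\tau + (1 - 2q_\tau)\tvarepsilon_\tau(h)$. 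Averaging over $\tau$ and recalling $\Err_\text{min} = \sum_\tau \PP{\tau} q_\tau$ yields the clean decompositions $\Err(h) - \Err_\text{min} = \sum_\tau \PP{\tau}(1 - 2q_\tau)\varepsilon_\tau(h)$ and $\Err_\delta(h) - \Err_\text{min} = \sum_\tau \PP{\tau}(1 - 2q_\tau)\tvarepsilon_\tau(h)$, in which the oracle baseline $\Err_\text{min}$ has cancelled.

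First I would upper bound $\eta$. Since $\Err(h^*_\delta) \ge \Err_\text{min}$ for every classifier, the first decomposition gives $\eta = \Err(\hath_\delta) - \Err(h^*_\delta) \le \Err(\hath_\delta) - \Err_\text{min} = \sum_\tau \PP{\tau}(1 - 2q_\tau)\varepsilon_\tau(\hath_\delta) \le \sum_\tau \PP{\tau}\,\varepsilon_\tau(\hath_\delta)$. Conditioning on the training sample so that $\hath_\delta$ is a fixed weight vector, I apply Theorem~\ref{theo:err} to it on each topic: $\varepsilon_\tau(\hath_\delta) = \too\p{\tvarepsilon_\tau(\hath_\delta)^{1/(1-\delta)} + \sqrt{\Psi_\tau(\hath_\delta)}}$. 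Because $\hath_\delta$ is restricted to well-balanced classifiers (Assumption~1) and $\|\lambda^{(\tau)}\|_1 \ge \lambda$, we get $\Psi_\tau(\hath_\delta) \le \kappa / \|\lambda^{(\tau)}\|_1 \le \kappa / \lambda$, so every Berry--Esseen term is $O(1/\sqrt{\lambda})$ uniformly in $\tau$. Hence $\eta = \too\p{\sum_\tau \PP{\tau}\,\tvarepsilon_\tau(\hath_\delta)^{1/(1-\delta)} + 1/\sqrt{\lambda}}$.

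Next I would control $\sum_\tau \PP{\tau}\,\tvarepsilon_\tau(\hath_\delta)^{1/(1-\delta)}$ by a power of $\teta$. The dropout decomposition together with Assumption~3, namely $\Err_\delta(h^*_\delta) - \Err_\text{min} = O(1/\sqrt{\lambda})$, gives $\sum_\tau \PP{\tau}(1 - 2q_\tau)\,\tvarepsilon_\tau(\hath_\delta) = \Err_\delta(\hath_\delta) - \Err_\text{min} = \teta + O(1/\sqrt{\lambda})$. Write $a_\tau \eqdef \PP{\tau}(1 - 2q_\tau)\,\tvarepsilon_\tau(\hath_\delta) \ge 0$, so $\sum_\tau a_\tau = \teta + O(1/\sqrt{\lambda})$. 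Using $\PP{\tau}\,\tvarepsilon_\tau^{1/(1-\delta)} = \PP{\tau}^{-\delta/(1-\delta)}\p{\PP{\tau}\tvarepsilon_\tau}^{1/(1-\delta)}$ together with $\PP{\tau} \ge p_\text{min}$ and $\PP{\tau}\tvarepsilon_\tau(\hath_\delta) \le (2\alpha)^{-1} a_\tau$ (from $1 - 2q_\tau \ge 2\alpha$) yields $\PP{\tau}\,\tvarepsilon_\tau(\hath_\delta)^{1/(1-\delta)} \le p_\text{min}^{-\delta/(1-\delta)}(2\alpha)^{-1/(1-\delta)}\, a_\tau^{1/(1-\delta)}$. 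Summing, the inequality $\sum_\tau a_\tau^{1/(1-\delta)} \le \p{\sum_\tau a_\tau}^{1/(1-\delta)}$ (valid as $1/(1-\delta) \ge 1$, $a_\tau \ge 0$), followed by $(u + v)^{1/(1-\delta)} \le 2^{\delta/(1-\delta)}\p{u^{1/(1-\delta)} + v^{1/(1-\delta)}}$ and $(1/\sqrt{\lambda})^{1/(1-\delta)} \le 1/\sqrt{\lambda}$, collapses the expression to $O\p{\teta^{1/(1-\delta)} + 1/\sqrt{\lambda}}$ with a leading constant that is an explicit function of $\delta$, $\alpha$ and $p_\text{min}$. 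Substituting into the bound for $\eta$ proves \eqref{eq:droperr}; the precise way the constant scales in $p_\text{min}^{-1}$ and $\alpha^{-1}$ is tracked in the appendix.

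The step I expect to be the main obstacle is this last conversion: Theorem~\ref{theo:err} hands us $\tvarepsilon_\tau^{1/(1-\delta)}$ through the \emph{convex} map $t \mapsto t^{1/(1-\delta)}$, so a direct appeal to Jensen goes the wrong way, while a crude bound like $\tvarepsilon_\tau^{1/(1-\delta)} \le \tvarepsilon_\tau$ would throw away the exponent improvement that is the whole point. The fix is to route through the uniform control $\tvarepsilon_\tau(\hath_\delta) \le (2\alpha p_\text{min})^{-1}\p{\teta + O(1/\sqrt{\lambda})}$ and the $\ell^{1/(1-\delta)} \le \ell^1$ inequality on the nonnegative vector $(a_\tau)_\tau$, which simultaneously avoids an explicit factor of the number of topics $T$ and preserves the power $1/(1-\delta)$. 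A secondary point to get right is that Theorem~\ref{theo:err} applies verbatim to the random classifier $\hath_\delta$ because its bound is deterministic in $w$, and that Assumption~3 is exactly the hypothesis that lets $\teta$ itself --- rather than a larger quantity comparing $\hath_\delta$ to the global optimum $h^*$ --- control the per-topic dropout errors.
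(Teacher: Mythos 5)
Your proposal is correct and follows essentially the same route as the paper's proof: the same per-topic decomposition $\Err_\delta(h) - \Err_\text{min} = \sum_\tau \PP{\tau}\,(1-2q_\tau)\,\tvarepsilon_\tau(h)$, the same use of Assumption~3 to identify this quantity with $\teta + \oo(1/\sqrt{\lambda})$, the same uniform control $\tvarepsilon_\tau \leq \Delta/(2\alpha p_\text{min})$ from Assumption~2, and the same per-topic application of Theorem~\ref{theo:err} with Assumption~1 forcing $\sqrt{\Psi_\tau} = \oo(1/\sqrt{\lambda})$. The only differences are cosmetic: you make explicit the step $\eta \leq \Err(\hath_\delta) - \Err_\text{min}$ (which the paper handles with a closing remark about subtracting $\Err(h^*_\delta)$), and you aggregate the topics via the $\ell^{1/(1-\delta)} \leq \ell^{1}$ inequality on the weighted errors rather than the paper's cruder supremum bound, yielding a marginally sharper constant but the identical rate.
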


In a sense, Theorem~\ref{theo:main} is a meta-generalization bound
that allows us to transform generalization bounds with respect
to the dropout measure ($\tilde\eta$) into ones on the data-generating measure ($\eta$)
in a modular way.
As a simple example,
standard VC theory provides an $\tilde\eta = \too_P(\sqrt{d/n})$ bound which,
together with Theorem~\ref{theo:main}, yields:

\begin{coro}
\label{coro:main}
Under the same conditions as Theorem \ref{theo:main},
the dropout classifier $\hath_\delta$ achieves the following excess risk:
\vspace{-3mm}
\begin{align}
\label{eq:coro:main}
\Err\p{\hath_\delta} - \Err\p{h_\delta^*}  = \too_P\p{\p{\sqrt{{\frac{d}{n}}}}^{\frac{1}{1 - \delta}} + \frac{1}{\sqrt{\lambda}} }.
\end{align}
\end{coro}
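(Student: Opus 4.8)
The plan is to derive Corollary~\ref{coro:main} by composing Theorem~\ref{theo:main} with the classical VC bound on the \emph{dropout} excess risk $\teta$. First I would observe that the dropout classifier $\hath_\delta$ of \eqref{eq:dropout} is, by construction, the empirical risk minimizer over well-balanced weight vectors of $\frac{1}{n}\sum_{i=1}^n\EE{\ell(w;\tx^{(i)},y^{(i)})}$ with $\ell$ the zero--one loss; this objective is, given the sample, an unbiased estimate of $\Err_\delta(h_w)$, and $h^*_\delta$ minimizes $\Err_\delta$ over the same well-balanced class by Assumptions~1 and~3 (in particular $\Err_\delta(h^*_\delta)=\inf_{w\ \mathrm{w.b.}}\Err_\delta(h_w)$). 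So it suffices to prove $\teta=\Err_\delta(\hath_\delta)-\Err_\delta(h^*_\delta)=\too_P(\sqrt{d/n})$ and then plug this into \eqref{eq:droperr}.

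For the uniform-convergence step I would note that $\frac{1}{n}\sum_i\EE{\ell(w;\tx^{(i)},y^{(i)})}$ is an average of $n$ i.i.d.\ functions of $w$ valued in $[0,1]$, each with mean $\Err_\delta(h_w)$; conditionally on $(x^{(i)},y^{(i)})$, such a function is a convex combination---over the realizations of the binomial thinning---of the unsmoothed zero--one losses $w\mapsto\ell(w;\tx,y)$. The class of homogeneous halfspaces on $\R^d$ has VC dimension $d$, and passing to the convex hull of a function class does not increase its Rademacher complexity, so a standard Massart/chaining argument gives $\EE{\sup_w\lvert\frac{1}{n}\sum_i\EE{\ell(w;\tx^{(i)},y^{(i)})}-\Err_\delta(h_w)\rvert}=O(\sqrt{d/n})$; bounded differences upgrades this to a high-probability statement, and restricting the supremum to the well-balanced subclass only shrinks the class. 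The usual ERM comparison then yields $\teta\le 2\sup_w\lvert\frac{1}{n}\sum_i\EE{\ell(w;\tx^{(i)},y^{(i)})}-\Err_\delta(h_w)\rvert=\too_P(\sqrt{d/n})$.

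Finally I would substitute into Theorem~\ref{theo:main}: since $t\mapsto t^{1/(1-\delta)}$ is increasing and sends polylogarithmic factors to polylogarithmic factors, $\teta=\too_P(\sqrt{d/n})$ gives $\teta^{1/(1-\delta)}=\too_P\p{(\sqrt{d/n})^{1/(1-\delta)}}$, and combining with the additive $1/\sqrt{\lambda}$ term of \eqref{eq:droperr} produces \eqref{eq:coro:main}, with the hidden constants (in $\delta$, $p_\text{min}$, $\alpha$, $\kappa$) inherited from Theorem~\ref{theo:main}. I expect the one nontrivial point to be the uniform-convergence step: because $\hath_\delta$ minimizes the noise-\emph{averaged} empirical loss rather than any single realization of the thinned features, one must check that this smoothing does not inflate the effective complexity of the hypothesis class beyond the $d$-governed rate $\sqrt{d/n}$---which is exactly what the convex-hull observation guarantees. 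Everything else is a routine composition of a VC bound with the meta-bound of Theorem~\ref{theo:main}.
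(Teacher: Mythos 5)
Your proposal follows essentially the same route as the paper: establish the standard VC-type rate $\teta = \too_P\bigl(\sqrt{d/n}\bigr)$ for the excess risk under the dropout measure, then compose with the meta-bound of Theorem~\ref{theo:main} (the paper uses the slightly sharper intermediate display \eqref{eq:droperr2}). The only difference is one of detail: where the paper disposes of the uniform-convergence step by citing standard methods and noting unbiasedness, boundedness, and independence of the per-example noise-averaged losses, you explicitly verify that averaging over the binomial thinning does not inflate the complexity of the halfspace class --- a point the paper glosses over but which your convex-combination (interchange of supremum and expectation) argument correctly handles.
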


More generally, we can often check that upper bounds for
$\Err(\hath) - \Err(h^*)$ also work as upper bounds for
$ \Err_\delta(\hath_\delta) - \Err_\delta(h^*_\delta)$; this gives us
the heuristic result from \eqref{eq:drop_gen}.

\sectshrink
\section{The Bias of Dropout}
\label{sec:geom}
\sectshrink

In the previous section, we showed that under the Poisson topic
model in Figure \ref{fig:graph}, dropout can achieve a
substantial cut in excess risk $\Err(\hath_\delta) - \Err(h_\delta^*)$.
But to complete our picture
of dropout's performance, we must address the bias of dropout:
$\Err(h_\delta^*) - \Err(h^*)$.

Dropout can be viewed as importing ``hints'' from a
generative assumption about the data. Each observed $(x,y)$ pair (each
labeled document) gives us information not only about the conditional class
probability at $x$, but also about the conditional class probabilities
at numerous other hypothetical values $\tx$ representing
shorter documents of the same class that did not occur.
Intuitively, if these $\tx$ are actually good representatives of that class,
the bias of dropout should be mild.

For our key result in this section, we will take the Poisson generative model
from Figure \ref{fig:graph}, but further assume that document length is
independent of the topic.
Under this assumption, we will show that dropout preserves the Bayes
decision boundary in the following sense:
\begin{prop}
\label{prop:cond}
Let $(x,y)$ be distributed according to the Poisson topic model of Figure~\ref{fig:graph}.
Assume that document length is independent of topic:
$\|\lambda^{(\tau)}\|_1 = \lambda$ for all topics $\tau$.
Let $\tx$ be a binomial dropout sample of $x$ with some dropout probability $\delta \in (0, 1)$.
Then, for every feature vector $v \in \R^d$, we have:
  \begin{align}
     \label{eq:balance_cond}
    \PP{y=1 \cond \tx=v} = \PP{y=1 \cond x=v}.
  \end{align}
\end{prop}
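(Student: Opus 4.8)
The argument is a short Bayes-rule computation resting on one structural fact: binomial thinning sends a Poisson count vector to another Poisson count vector, rescaling every topic's mean vector by the \emph{same} factor $1-\delta$. First I would record the thinning identity: if $x_j \cond \tau \sim \Pois(\lambda^{(\tau)}_j)$ independently over $j$, then $\tx_j \cond \tau \sim \Pois\big((1-\delta)\lambda^{(\tau)}_j\big)$ independently over $j$, so $\tx$ given $\tau$ is again a Poisson vector, now with mean vector $(1-\delta)\lambda^{(\tau)}$, whose expected length $(1-\delta)\lambda$ is topic-independent precisely because $\|\lambda^{(\tau)}\|_1=\lambda$ for all $\tau$. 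Throughout I take $v \in \{0,1,2,\dots\}^d$; for any other $v$ both sides of \eqref{eq:balance_cond} condition on a null event, so there is nothing to check.

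Next I would marginalize over the topic and compare class-conditional likelihood ratios. Writing $\rho_c(\tau) = \PP{\tau^{(i)} = \tau \cond y^{(i)} = c}$ and using $\prod_j e^{-\lambda^{(\tau)}_j} = e^{-\|\lambda^{(\tau)}\|_1} = e^{-\lambda}$ to pull the Poisson normalizer out of the topic sum (this is the one place the equal-length hypothesis does real work),
\begin{align*}
\PP{x = v \cond y = c} &= \frac{e^{-\lambda}}{\prod_j v_j!}\sum_{\tau}\rho_c(\tau)\prod_{j=1}^d\p{\lambda^{(\tau)}_j}^{v_j},\\
\PP{\tx = v \cond y = c} &= \frac{e^{-(1-\delta)\lambda}\,(1-\delta)^{\|v\|_1}}{\prod_j v_j!}\sum_{\tau}\rho_c(\tau)\prod_{j=1}^d\p{\lambda^{(\tau)}_j}^{v_j},
\end{align*}
the second line being the same computation with $\lambda^{(\tau)}$ replaced by $(1-\delta)\lambda^{(\tau)}$. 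Forming the ratio of the $c=1$ and $c=0$ versions, the prefactors — which do not depend on $c$ — cancel in both cases, and in each case what remains is the identical quantity $\frac{\sum_\tau\rho_1(\tau)\prod_j(\lambda^{(\tau)}_j)^{v_j}}{\sum_\tau\rho_0(\tau)\prod_j(\lambda^{(\tau)}_j)^{v_j}}$. So $x$ and $\tx$ induce the same class-conditional likelihood ratio at every $v$; since dropout leaves the label prior $\PP{y=1}$ untouched, Bayes' rule gives \eqref{eq:balance_cond}.

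The calculation itself is routine, so the real content is seeing where the hypotheses bite. The assumption that document length is topic-free is exactly what lets the normalizer $e^{-\lambda}$ escape the topic sum; thinning then multiplies each topic's unnormalized likelihood at $v$ by the common scalar $(1-\delta)^{\|v\|_1}$, which therefore cancels from the posterior. (An equivalent and perhaps cleaner packaging: Poissonize — $\|x\|_1 \sim \Pois(\lambda)$ is independent of $(y,\tau)$, and $x$ given $(\|x\|_1,\tau)$ is multinomial with cell probabilities $\lambda^{(\tau)}/\lambda$; dropout only replaces $\|x\|_1$ by $\Binom(\|x\|_1,1-\delta)\sim\Pois\big((1-\delta)\lambda\big)$, still independent of $(y,\tau)$, and leaves the multinomial part, hence the law of $y$ given the observed vector, unchanged.) Without the equal-length assumption the thinned normalizer $e^{-(1-\delta)\|\lambda^{(\tau)}\|_1}$ stays trapped inside the topic sum with a topic-dependent exponent, and the identity breaks — which is why the proposition is stated under that restriction, and why a version quantifying the dropout bias for unequal lengths would require genuinely more work.
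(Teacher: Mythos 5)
Your proof is correct and is essentially the paper's argument: both rest on the observation that, under the equal-length assumption, the only part of the likelihood that dropout alters is a topic- and class-independent factor (your prefactor $e^{-(1-\delta)\lambda}(1-\delta)^{\|v\|_1}$, the paper's marginal distribution of the document length), which cancels when conditioning on the observed vector to get the posterior over $y$. The paper packages this via the length-times-multinomial decomposition that you yourself sketch in your closing parenthetical; your direct Poisson likelihood-ratio computation is the same calculation written out explicitly, and you correctly pinpoint where the equal-length hypothesis is used and why the identity fails without it.
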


If we had an infinite amount of data $(\tx,y)$ corrupted under dropout,
we would predict according to $\I\{ \PP{y = 1 \cond \tx=v} > \frac12 \}$.
The significance of Proposition~\ref{prop:cond} is that this decision
rule is identical to the true Bayes decision boundary (without dropout).
Therefore, the empirical risk minimizer of a sufficiently
rich hypothesis class trained with dropout would incur very small bias.

However, Proposition~\ref{prop:cond} does {\em not} guarantee
that dropout incurs no bias when we fit a linear classifier.
In general, the best linear
approximation for classifying shorter documents is not necessarily the best for
classifying longer documents.  As $n\to\infty$, a linear classifier
trained on $(x,y)$ pairs will eventually
outperform one trained on $(\tx, y)$ pairs.

\begin{figure}[t]
\figshrink
\begin{subfigure}[b]{0.45\columnwidth}
\centering
  \includegraphics[width=\columnwidth]{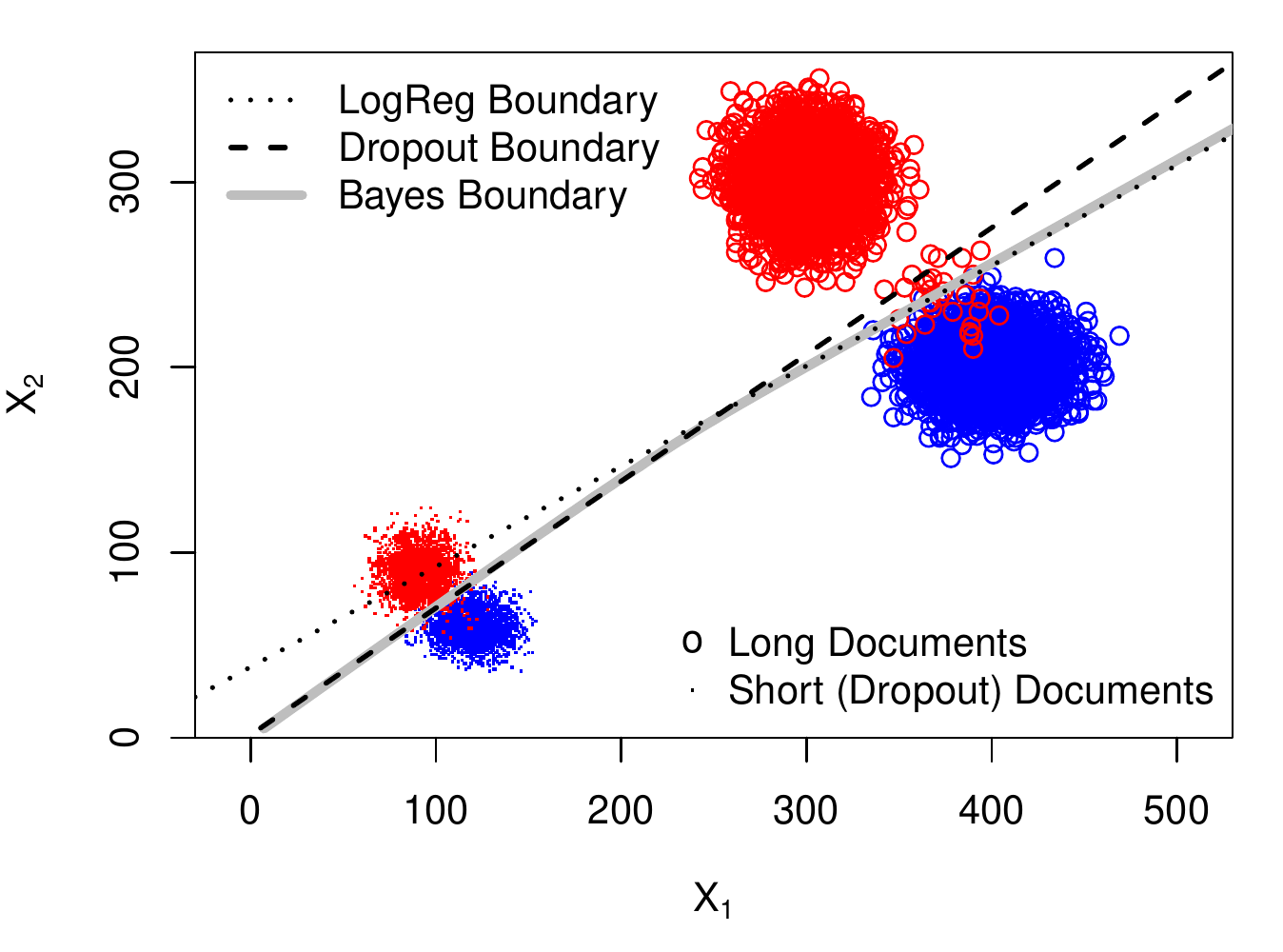}
  \vspace{-6mm}
  \caption{Dropout ($\delta=0.75$) with $d=2$.  For
    long documents (circles in the upper-right),
    logistic regression focuses on capturing the
    small red cluster; the large red cluster has almost no
    influence.  Dropout (dots in the lower-left) distributes
    influence more equally between the two red clusters.}
  \label{fig:dropInfl}
\end{subfigure}
\hspace{0.05\columnwidth}
\begin{subfigure}[b]{0.46\columnwidth}
  \centering
  \includegraphics[width=1\columnwidth]{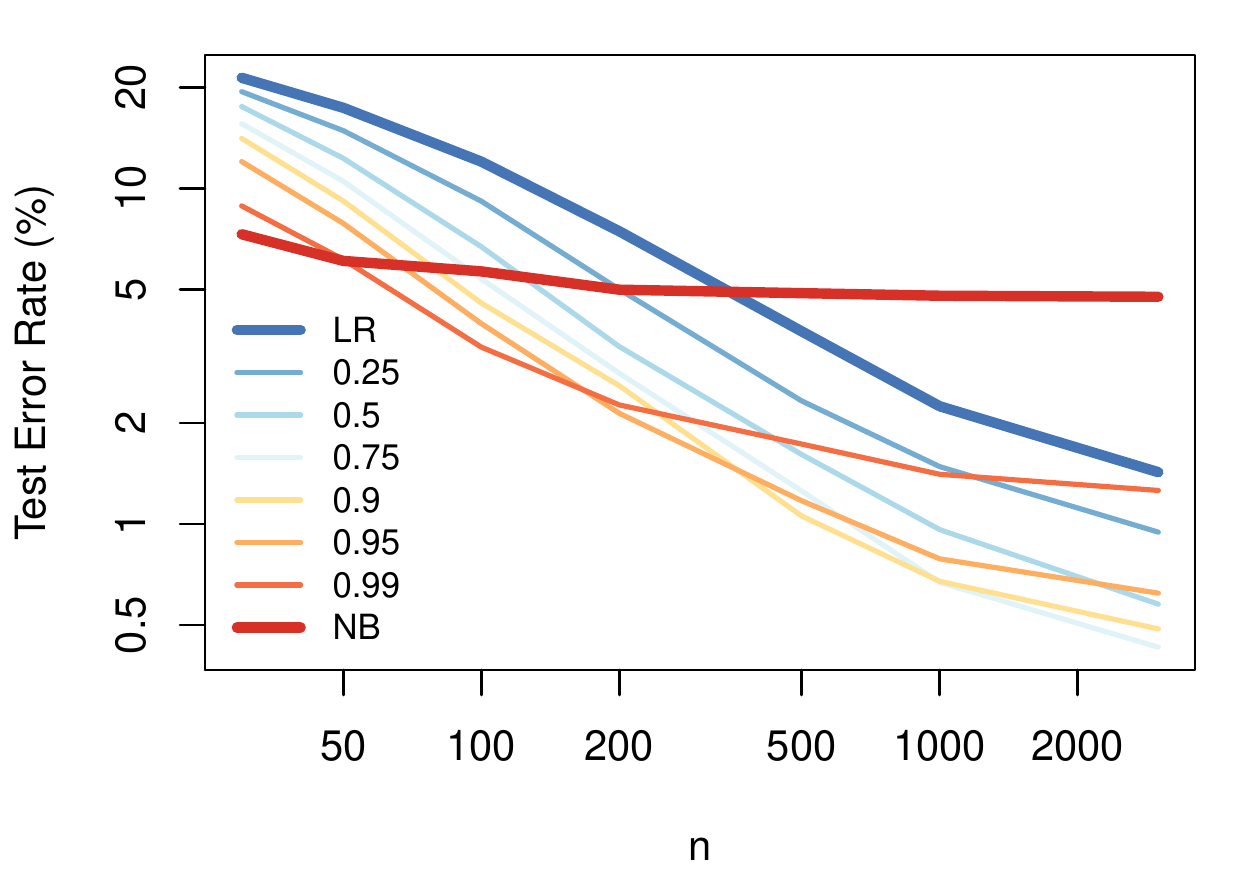}
  \caption{Learning curves for the synthetic experiment.
    Each axis is plotted on a log scale.  Here the dropout rate $\delta$ ranges
    from 0 (logistic regression) to 1 (naive Bayes) for multiple
    values of training set sizes $n$.
    As $n$ increases, less dropout is preferable,
    as the bias-variance tradeoff shifts.
    }
  \label{fig:multiMixSim}
\end{subfigure}
\caption{Behavior of binomial dropout in simulations. In the left panel, the circles are the original data, while the dots are dropout-thinned examples.  The Monte Carlo error is negligible.}
\label{fig:simu}
\figshrink
\end{figure}

\paragraph{Dropout for Logistic Regression}
To gain some more intuition about how dropout affects linear
classifiers, we consider logistic regression. A similar
phenomenon should also hold for the ERM, but discussing this solution
is more difficult since the ERM solution does not have have a simple
characterization.
The relationship between the 0-1 loss and convex surrogates has been
studied by, e.g., \cite{bartlett2006convexity,zhang2004statistical}.
The score criterion for logistic regression is
$0 = \sum_{i=1}^n \left(y^{(i)} - \hat p_i\right)x^{(i)}$,
where $\hat p_i = (1+e^{-\hw \cdot x^{(i)}})^{-1}$ are the fitted probabilities.
Note that easily-classified examples (where $\hat p_i$ is close to $y^{(i)}$) play almost
no role in driving the fit.  Dropout turns
easy examples into hard examples, giving more examples a chance to
participate in learning a good classification rule.

Figure \ref{fig:dropInfl} illustrates dropout's tendency to spread
influence more democratically
for a simple classification problem with $d=2$.
The red class is a 99:1 mixture over two topics, one of
which is much less common, but harder to classify, than the other.
There is only one topic for the blue class.
For long documents (open circles in the top right), the infrequent,
hard-to-classify
red cluster dominates the fit while the frequent, easy-to-classify red
cluster is essentially ignored.  For dropout documents with
$\delta=0.75$ (small dots, lower left),
both red clusters are relatively hard to classify, so the infrequent
one plays a less disproportionate role in driving the fit.  As a
result, the fit based on dropout is more stable but
misses the finer structure near the decision boundary.
Note that the solid gray curve, the Bayes boundary, is unaffected
by dropout, per Proposition \ref{prop:cond}.  But,
because it is nonlinear, we obtain a different linear approximation
under dropout.

\sectshrink
\section{Experiments and Discussion}
\sectshrink


\paragraph{Synthetic Experiment}
Consider the following instance of the Poisson topic model:
We choose the document label uniformly at random: $\PP{y^{(i)} = 1} = \frac12$.
Given label $0$, we choose topic $\taui = 0$ deterministically;
given label $1$, we choose a real-valued topic $\taui \sim \Exp(3)$.
The per-topic Poisson intensities $\lambda^{(\tau)}$ are defined as follows:
\begin{align}
\theta^{(\tau)} = \begin{cases}
(1, \, \ldots, \,1 \,\, \cond \,\,
 0,\, \ldots,\, 0 \,\, \cond \,\,
 0,\, \ldots,\, 0) & \text{ if } \tau = 0, \\
(\underbrace{0, \, \ldots, \,0}_{7} \, \cond \,
 \underbrace{\tau,\, \ldots,\, \tau}_7 \,\cond\,
 \underbrace{0,\, \ldots,\, 0}_{486}) & \text{ otherwise,}
\end{cases}
\quad
\lambda^{(\tau)}_j = 1000 \cdot \frac{e^{\theta^{(\tau)}_j}}{\sum_{j'=1}^{500} e^{\theta^{(\tau)}_{j'}}}.
\end{align}
The first block of 7 independent words are indicative of label 0,
the second block of 7 \emph{correlated} words are indicative of label 1,
and the remaining 486 words are indicative of neither.

We train a model on training sets of various size $n$, and evaluate
the resulting classifiers' error rates on a large test set.
For dropout, we recalibrate the intercept on the training set.
Figure \ref{fig:multiMixSim} shows the results.
There is a clear bias-variance tradeoff, with logistic regression
(${\delta=0}$) and naive Bayes (${\delta=1}$) on the two ends of the
spectrum.\footnote{When the logistic regression fit is degenerate, we use $\LII$-regularized logistic regression with weight $10^{-7}$.}
For moderate values of $n$, dropout improves performance, with ${\delta=0.95}$ (resulting
in roughly 50-word documents) appearing nearly optimal for this example.

\begin{figure}[t]
\figshrink
\begin{subfigure}[b]{0.45\columnwidth}
\centering
\includegraphics[width = \columnwidth]{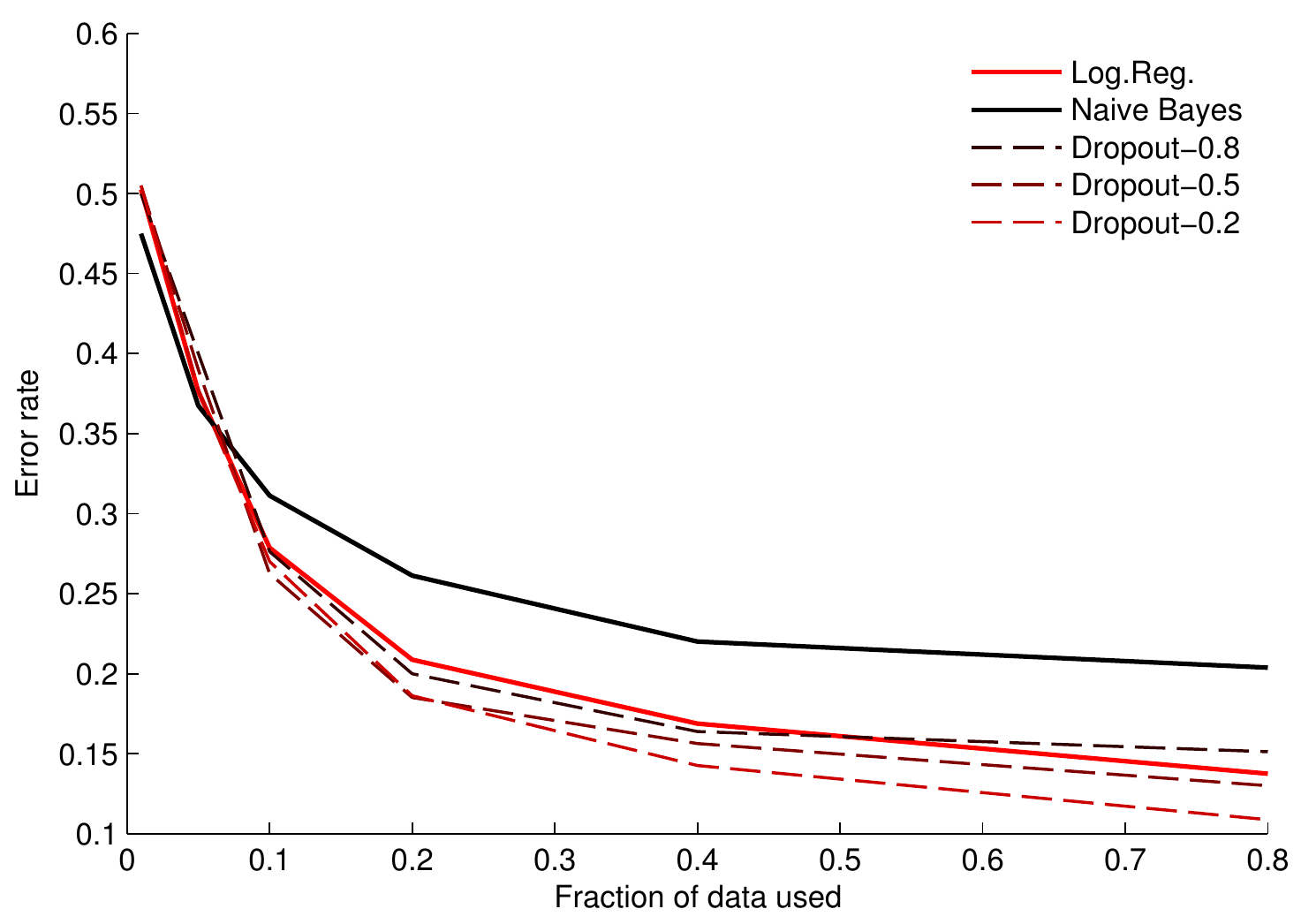}
\caption{Polarity 2.0 dataset \cite{pang2004sentimental}.}
\label{fig:polarity}
\end{subfigure}
\hspace{0.05\columnwidth}
\begin{subfigure}[b]{0.45\columnwidth}
\centering
\includegraphics[width = \columnwidth]{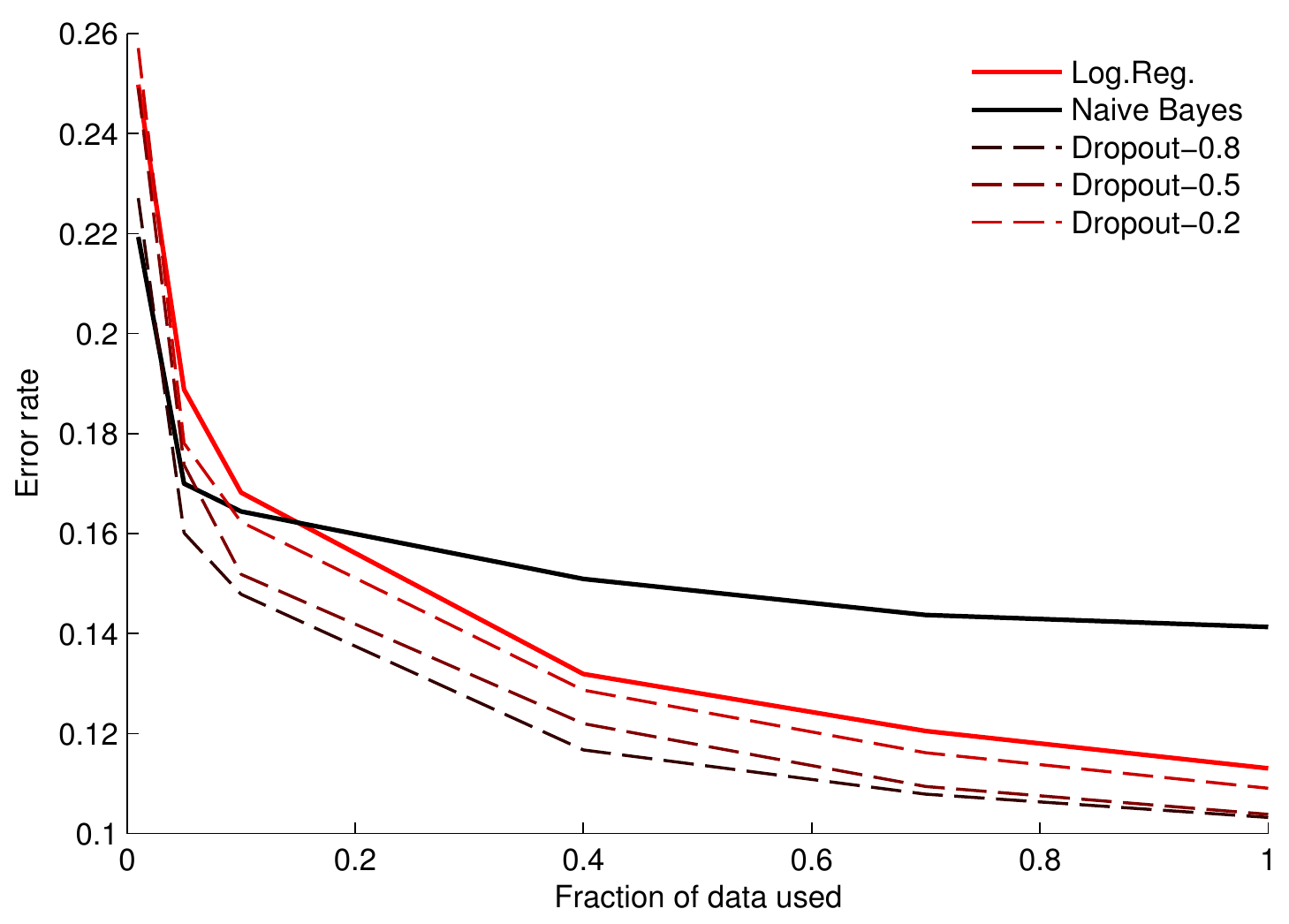}
\caption{IMDB dataset \cite{maas2011learning}.}
\label{fig:imdb}
\end{subfigure}
\caption{Experiments on sentiment classification.
More dropout is better relative to logistic regression
for small datasets and gradually worsens with more training data.
}
\label{fig:nlp}
\figshrink
\end{figure}

\sectshrink
\paragraph{Sentiment Classification}

We also examined the performance of dropout as a function of training set size
on a document classification task. Figure \ref{fig:polarity} shows results on
the Polarity 2.0 task \cite{pang2004sentimental}, where the goal is to
classify positive versus negative movie reviews on IMDB. We divided the dataset
into a training set of size 1,200 and a test set of size 800, and
trained a bag-of-words
logistic regression model with 50,922 features. This example exhibits
the same behavior as our
simulation. Using a larger $\delta$ results in a classifier that
converges faster at
first, but then plateaus.
We also ran experiments on
a larger IMDB dataset \cite{maas2011learning}
with training and test sets of size 25,000 each and approximately 300,000 features.
As Figure \ref{fig:imdb} shows,
the results are similar, although the training set is not large enough
for the learning curves to cross. When using the full training set, all but three pairwise comparisons in Figure~\ref{fig:nlp} are statistically significant ($p<0.05$ for McNemar's test).

\sectshrink
\paragraph{Dropout and Generative Modeling}

Naive Bayes and empirical risk minimization represent two divergent approaches to
the classification problem. ERM is guaranteed to find the
best model as $n \rightarrow \infty$ but can have suboptimal generalization
error when $n$ is not large relative to $d$. Conversely, naive Bayes has very
low generalization error, but suffers from asymptotic bias. In this
paper, we showed that dropout behaves as a link between ERM
and naive Bayes, and can sometimes achieve a
more favorable bias-variance tradeoff.
By training on randomly generated sub-documents
rather than on whole documents, dropout implicitly codifies a
generative assumption about the data, namely that excerpts from a
long document should have the same label as the original document
(Proposition~\ref{prop:cond}).

Logistic regression with dropout appears to have
an intriguing connection to the naive Bayes SVM (NBSVM)
\cite{wang2012baselines}, which is a way of using naive
Bayes generative assumptions to strengthen an SVM. In a recent survey of bag-of-words
classifiers for document classification, NBSVM and dropout often obtain
state-of-the-art accuracies, e.g., \cite{wang2013fast}. This suggests that a good way to
learn linear models for document classification is to use discriminative
models that borrow strength from an approximate generative
assumption to cut their generalization error.
Our analysis presents an interesting contrast to other work that
directly combine generative and discriminative modeling by optimizing a
hybrid likelihood \cite{raina04hybrid,bouchard04tradeoff,lasserre06hybrid,bouchard2007bias,pal06mcl,liang2008asymptotic}.
Our approach is more guarded in that we only let the generative assumption
speak through pseudo-examples.

\sectshrink
\paragraph{Conclusion}

We have presented a theoretical analysis that explains how
dropout training can be very helpful under a Poisson topic model assumption.
Specifically, by making training examples artificially difficult, dropout improves
the exponent in the generalization bound for ERM.
We believe that this work is just the first step in understanding
the benefits of training with artificially corrupted features,
and we hope the tools we have developed can be extended to analyze
other training schemes under weaker data-generating assumptions.

\clearpage

{\small
\bibliographystyle{unsrt}
\bibliography{./references}
}

\clearpage

\begin{appendix}

\section{Technical Results}
\label{sec:technical}

We now give detailed proofs of the theorems in the paper.

\subsection{Altitude Training Phenomeon}

We begin with a proof of our main generalization bound result, namely Theorem \ref{theo:err}.
The proof is built on top of the following Berry-Esseen type result.

\begin{lemm}
\label{lemm:berry-esseen}
Let $Z_1$, ..., $Z_d$ be independent Poisson random variables with means $\lambda_j \in \RR_+$, and let
$$ S = \sum_{j = 1}^d w_jZ_j, \, \mu = \EE{S}, \eqand \sigma^2 = \Var{S} $$
for some fixed set of weights $\{w_j\}_{j=1}^d$. Then, writing $F_S$ for the distribution function of $S$ and $\Phi$ for the standard Gaussian distribution,
\begin{equation}
\label{eq:berry-esseen}
\sup_{x \in \RR} \left\lvert F_S(x) - \Phi\p{\frac{x - \mu}{\sigma}} \right\lvert \leq \CBE \sqrt{\frac{\max_j \{w_j^2\}}{\sum_{j = 1}^d \lambda_j w_j^2}},
\end{equation}
where $\CBE \le 4$.

\proof
Our first step is to write $S$ as a sum of bounded \emph{i.i.d.} random
variables. Let $N = \sum_{j = 1}^d Z_j$. Conditional on $N$, the $Z_j$ are
distributed as a multinomial with parameters $\pi_j = \lambda_j/\lambda$ where
$\lambda = \sum_{j=1}^d \lambda_j$. Thus,
$$ \law\p{S \cond N} \eqd \law\p{\sum_{k = 1}^N W_k \cond N}, $$
where $W_k \in \{w_1, \, ..., \, w_d\}$ is a single multinomial draw from the available weights with probability parameters $\PP{W_k = w_j} =\pi_j$. This implies that,
$$ S \eqd \sum_{k = 1}^N W_k, $$
where $N$ itself is a Poisson random variable with mean $\lambda$.

We also know that a Poisson random variable can be written as a limiting mixture of many rare Bernoulli trials:
$$ B^{(m)} \Rightarrow N, \with B^{(m)} = \Binom\p{m, \, \frac{\lambda}{m}}. $$
The upshot is that
\begin{equation}
\label{eq:poisson_bdd_sum}
S^{(m)} \Rightarrow S, \with S^{(m)} = \sum_{k = 1}^m W_k I_k,
\end{equation}
where the $W_k$ are as before, and the $I_k$ are independent Bernoulli draws with parameter $\lambda/m$. Because $S^{(m)}$ converges to $S$ in distribution, it suffices to show that \eqref{eq:berry-esseen} holds for large enough $m$. The moments of $S^{(m)}$ are correct in finite samples: $\EE{S^{(m)}} = \mu$ and $\Var{S^{(m)}} = \sigma^2$ for all $m$.

The key ingredient in establishing \eqref{eq:berry-esseen} is the Berry-Esseen inequality \cite{feller1971introduction}, which in our case implies that
$$ \sup_{x \in \RR} \left\lvert F_{S^{(m)}}(x) - \Phi\p{\frac{x - \mu}{\sigma}} \right\rvert \le \frac{\rho_m}{2 s_m^3 \sqrt{m}}, $$
where
\begin{align*}
&s_m^2 = \Var{W_kI_k}, \\
&\rho_m = \EE{\left\lvert W_kI_k - \EE{W_kI_k}\right\rvert^3},
\end{align*}
We can show that
\begin{align*}
&s_m^2 = \EE{\p{W_kI_k}^2} - \EE{W_kI_k}^2= \frac{\lambda}{m}\EE{W_k^2} - \p{\frac{\lambda}{m}\EE{W_k}}^2, \eqand \\
&\rho_m  \leq 8 \p{\EE{\left\lvert W_kI_k \right\rvert^3} + \EE{\left\lvert W_kI_k \right\rvert}^3}= 8\p{\frac{\lambda}{m} \EE{\left\lvert W_k \right\rvert^3} + \p{\frac{\lambda}{m} \EE{\left\lvert W_k \right\rvert}}^3 }.
\end{align*}
Taking $m$ to $\infty$, this implies that
$$ \sup_{x \in \RR} \left\lvert F_{S}(x) - \Phi\p{\frac{x - \mu}{\sigma}} \right\rvert \le \frac{4 \EE{\left\lvert W \right\rvert^3}}{\EE{W^2}^{3/2}} \frac{1}{\sqrt{\lambda}}. $$
Thus, to establish \eqref{eq:berry-esseen}, it only remains to bound $\EE{\left\lvert W \right\rvert^3} / \EE{W^2}^{3/2}$.
Notice that $P_j \eqdef \pi_j w_j^2/\EE{W^2}$ defines a probability distribution on $\{1, \dots, d\}$, and
$$ \frac{\EE{\left\lvert W \right\rvert^3}}{\EE{W^2}} = \EE[P]{|W|} \leq \max_j \{|w_j|\}. $$
Thus,
$$ \frac{\EE{\left\lvert W \right\rvert^3}}{ \EE{W^2}^{3/2}} \leq \sqrt{\frac{\max_j \{w_j^2\}}{\sum_{j = 1}^d \pi_j w_j^2}}. $$
\endproof
\end{lemm}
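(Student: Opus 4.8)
The plan is to reduce the weighted Poisson sum $S$ to a genuine sum of i.i.d. bounded random variables, to which the classical Berry--Esseen inequality applies, and then to track the moments so carefully that the final bound collapses to the stated form. The classical inequality cannot be invoked directly, since $S=\sum_j w_j Z_j$ is not itself a sum of i.i.d. terms; manufacturing such a representation is the crux of the argument. First I would Poissonize: let $N=\sum_{j=1}^d Z_j$, which is $\Pois\p{\lambda}$ with $\lambda=\sum_j\lambda_j$, and recall that conditionally on $N$ the counts $(Z_1,\dots,Z_d)$ are multinomial with probabilities $\pi_j=\lambda_j/\lambda$. Consequently $S$ has the same law as the random sum $\sum_{k=1}^N W_k$, where the $W_k$ are i.i.d. draws from the weight set with $\PP{W_k=w_j}=\pi_j$.

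Next I would de-Poissonize to obtain a deterministic number of summands. Using the classical fact $\Binom\p{m,\lambda/m}\Rightarrow\Pois\p{\lambda}$, I replace the Poisson count $N$ by a binomial one and set $S^{(m)}=\sum_{k=1}^m W_k I_k$ with the $I_k$ i.i.d. $\mathrm{Bernoulli}(\lambda/m)$ independent of the $W_k$. This $S^{(m)}$ is a sum of $m$ i.i.d. bounded variables; it satisfies $\EE{S^{(m)}}=\mu$ and $\Var{S^{(m)}}=\sigma^2$ for every $m$, and $S^{(m)}\Rightarrow S$. Since the limiting Gaussian CDF is continuous, a limiting argument at the continuity points of $F_S$ shows that if the Kolmogorov distance of $S^{(m)}$ to the Gaussian is at most $b_m$ and $b_m\to b$, then the Kolmogorov distance of $S$ to the Gaussian is at most $b$; thus it suffices to establish the bound for $S^{(m)}$ and pass to the limit.

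Then I would apply the i.i.d. Berry--Esseen inequality (with the admissible constant $\tfrac12$) to $S^{(m)}$, yielding a bound of the form $\rho_m/(2 s_m^3\sqrt m)$, where $s_m^2=\Var{W_k I_k}$ and $\rho_m=\EE{|W_k I_k-\EE{W_k I_k}|^3}$. The routine part is computing these moments: $s_m^2$ equals $\tfrac{\lambda}{m}\EE{W^2}$ up to an $O(1/m^2)$ correction, and the $c_r$-inequality bounds the central third moment $\rho_m$ by a factor ($8$) times $\tfrac{\lambda}{m}\EE{|W|^3}$ at leading order. The powers of $m$ and of $\lambda$ then cancel in the limit, and the $8$ combined with the Berry--Esseen constant $\tfrac12$ produces the clean prefactor, leaving $b=\frac{4\,\EE{|W|^3}}{\EE{W^2}^{3/2}}\frac{1}{\sqrt\lambda}$.

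The last step is the scale-free moment estimate that produces the advertised right-hand side. I would observe that $P_j\propto\pi_j w_j^2$ defines a probability distribution on $\{1,\dots,d\}$ (the $w^2$-tilt of $\pi$), under which $\EE{|W|^3}/\EE{W^2}=\EE[P]{|W|}\le\max_j|w_j|$. Substituting this, and unwinding $\EE{W^2}=\tfrac{1}{\lambda}\sum_j\lambda_j w_j^2$, cancels the remaining factor of $\lambda$ and yields exactly $\CBE\sqrt{\max_j\{w_j^2\}/\sum_j\lambda_j w_j^2}$ with $\CBE\le 4$. The main obstacle throughout is the Poissonize/de-Poissonize reduction of the first two steps: turning $S$ into an honest i.i.d. sum and then justifying that the weak limit transfers the uniform Kolmogorov bound. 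Once that representation is secured, the remainder is moment bookkeeping plus the single tilting inequality.
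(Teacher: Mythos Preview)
Your proposal is correct and follows essentially the same route as the paper: the Poissonization $S\eqd\sum_{k=1}^N W_k$, the de-Poissonization via $S^{(m)}=\sum_{k=1}^m W_kI_k$ with $I_k\sim\mathrm{Bernoulli}(\lambda/m)$, the classical Berry--Esseen bound with constant $\tfrac12$, the $c_r$-inequality producing the factor $8$, and the $w^2$-tilted distribution $P_j\propto\pi_jw_j^2$ to bound $\EE{|W|^3}/\EE{W^2}$ are all exactly the steps the paper takes. Your remark that the Kolmogorov bound transfers through the weak limit because the Gaussian CDF is continuous is in fact slightly more careful than the paper, which simply asserts that it suffices to bound $S^{(m)}$ for large $m$.
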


We are now ready to prove our main result.

\begin{proof}[Proof of Theorem \ref{theo:err}]

The classifier $h$ is a linear classifier of the form
$$ h\p{x} = \I\left\{S> 0\right\} \where S \eqdef \sum_{j = 1}^d w_j x_j , $$
where by assumption $x_j \sim \Pois\p{\lambda_j^{(\tau)}}$.
Our model was fit by dropout, so during training we only get to work with $\tx$ instead of $x$,
where
\begin{align*}
&\tx_j \sim \Binom\p{x_j, \, 1 - \delta}, \text{ and so unconditionally}\\
&\tx_j \sim \Pois\p{\p{1 - \delta} \, \lambda_j^{(\tau)}}.
\end{align*}
Without loss of generality, suppose that $c_\tau = 1$, so that we can write
the error rate $\varepsilon_\tau$ during dropout as
\begin{equation}
\label{eq:boundary}
\varepsilon_\tau = \PP{\tS < 0 \cond \tau}, \where \tS = \sum_{j = 1}^d w_j \tx_j.
\end{equation}
In order to prove our result, we need to translate the information about $\tS$ into information about $S$.

The key to the proof is to show that the sums $S$ and $\tS$ have nearly Gaussian distributions. Let
$$\mu = \sum_{j = 1}^d  \lambda^{(\tau)}_j w_j \eqand \sigma^2 = \sum_{j = 1}^d  \lambda^{(\tau)}_j w_j^2 $$
be the mean and variance of $S$. After dropout,
$$ \EE{\tS} = \p{1 - \delta} {\mu}\eqand \Var{\tS} = \p{1 - \delta} {\sigma^2}. $$
Writing $F_S$ and $F_{\tS}$ for the distributions of $S$ and $\tS$, we see from Lemma \ref{lemm:berry-esseen} that
\begin{align*}
&\sup_{x \in \RR} \left\lvert F_{S}(x) - \Phi\p{\frac{x - \mu}{\sigma}} \right\lvert \leq \CBE \,\sqrt{\Psi_\tau} \, \eqand \\
&\sup_{x \in \RR} \left\lvert F_{\tS}(x) - \Phi\p{\frac{x - \p{1 - \delta} {\mu}}{\sqrt{1 - \delta} \, {\sigma}}} \right\lvert \leq \frac{\CBE}{\sqrt{1 - \delta}} \,\sqrt{\Psi_\tau},
\end{align*}
where $\Psi_\tau$ is as defined in \eqref{eq:error-cut}.
Recall that our objective is to bound $\varepsilon_\tau = F_S(0)$ in terms of $\tvarepsilon_\tau = F_{\tS}(0)$. The above result implies that
\begin{align*}
&\varepsilon_\tau \leq \Phi\p{-\frac{\mu}{\sigma}} + \CBE \, \sqrt{\Psi_\tau} , \eqand \\
&\Phi\p{-\sqrt{1 - \delta} \, \frac{\mu}{\sigma}} \leq \tvarepsilon_\tau + \frac{\CBE}{\sqrt{1 - \delta}} \, \sqrt{\Psi_\tau}.
\end{align*}
Now, writing $t = \sqrt{1 - \delta} \,\mu/\sigma$, we can use the Gaussian tail inequalities
\begin{equation}
\label{eq:gauss_tail}
\frac{\tau}{\tau^2 + 1} < \sqrt{2\pi} \, e^{\frac{\tau^2}{2}} \, \Phi\p{-\tau} < \frac{1}{\tau} \text{ for all } \tau > 0
\end{equation}
to check that for all $t \geq 1$,
\begin{align*}
\Phi\bigg(-&\frac{t}{\sqrt{1 - \delta}}\bigg)
\leq \frac{1}{\sqrt{2\pi}}\,\frac{\sqrt{1 - \delta}}{t} \, e^{-\frac{t^2}{2(1 - \delta)}} \\
&= \frac{\sqrt{1 - \delta} \, t^{\frac{\delta}{1 - \delta}}}{\sqrt{2\pi}^{-\frac{\delta}{1 - \delta}}} \p{\frac{1}{\sqrt{2\pi}} \frac{1}{t} e^{-\frac{t^2}{2}}}^\frac{1}{1 - \delta} \\
&\leq 2^\frac{1}{1 - \delta} \frac{\sqrt{1 - \delta} \, t^{\frac{\delta}{1 - \delta}}}{\sqrt{2\pi}^{-\frac{\delta}{1 - \delta}}} \p{\frac{1}{\sqrt{2\pi}} \frac{t}{t^2 + 1} e^{-\frac{t^2}{2}}}^\frac{1}{1 - \delta} \\
&\leq \frac{2^\frac{1}{1 - \delta} \sqrt{1 - \delta}}{\sqrt{2\pi}^{-\frac{\delta}{1 - \delta}}} \, t^{\frac{\delta}{1 - \delta}} \, \Phi\p{-t}^\frac{1}{1 - \delta}
\end{align*}
and so noting that in $t\,\Phi(-t)$ is monotone decreasing in our range of interest and that $t \leq \sqrt{-2\log \Phi(-t)}$, we conclude that for all $\tvarepsilon_\tau + {\CBE}/{\sqrt{1 - \delta}} \, \sqrt{\Psi_\tau} \leq \Phi(-1)$,
\begin{align}
\label{eq:delta_bound}
\notag
&\varepsilon_\tau \leq  \frac{2^\frac{1}{1 - \delta} \sqrt{1 - \delta}}{\sqrt{4\pi}^{-\frac{\delta}{1 - \delta}}} \p{ \sqrt{-\log \p{ \tvarepsilon + \frac{\CBE}{\sqrt{1 - \delta}} \, \sqrt{\Psi_\tau}}}}^\frac{\delta}{1 - \delta} \\
& \ \ \ \ \ \  \cdot\p{\tvarepsilon + \frac{\CBE}{\sqrt{1 - \delta}} \, \sqrt{\Psi_\tau}}^\frac{1}{1 - \delta}  + \CBE \, \sqrt{\Psi_\tau}.
\end{align}
We can also write the above expression in more condensed form:
\begin{align}
\label{eq:err2}
&\PP{\I\{\hw \cdot x^{(i)}\} \neq c_\tau \cond \taui = \tau} \\
\notag
&\ \ \ \ = \oo \p{\p{\tvarepsilon_\tau + \sqrt{\frac{\max\left\{w_j^2\right\}}{\sum_{j = 1}^d \lambda^{(\tau)}_j w_j^2}}^{(1 - \delta)}}^\frac{1}{1 - \delta} \ \cdot \max\left\{1, \, \sqrt{-\log\p{\tvarepsilon_\tau}}^{\frac{\delta}{1 - \delta}}\right\}}.
\end{align}
The desired conclusion \eqref{eq:error-cut} is equivalent to the above expression, except it uses notation that hides the log factors.
\end{proof}


\begin{proof}[Proof of Theorem \ref{theo:main}]
We can write the dropout error rate as
\begin{align*}
\Err_\delta\p{\hath_\delta} = \Err_\text{min} + \Delta,
\end{align*}
where $\Err_\text{min}$ is the minimal possible error from assumption \eqref{eq:shatter} and $\Delta$ is the the excess error
\begin{align*}
&\Delta = \sum_{\tau = 1}^T  \PP{\tau}  \tvarepsilon_\tau
\cdot \left\lvert \PP{y^{(i)} = 1 \cond \taui = \tau} - \PP{y^{(i)} = 0 \cond \taui = \tau} \right\rvert.
\end{align*}
Here, $\PP{\tau}$ is the probability of observing a document with topic $\tau$ and $\tvarepsilon_\tau$ is as in Theorem \ref{theo:err}.
The equality follows by noting that, for each topic $\tau$, the excess error rate is given by the rate at which we make sub-optimal guesses, i.e., $\tvarepsilon_\tau$, times the excess probability that we make a classification error given that we made a sub-optimal guess, i.e., $\left\lvert \PP{y^{(i)} = 1 \cond \taui = \tau} - \PP{y^{(i)} = 0 \cond \taui = \tau} \right\rvert$.

Now, thanks to \eqref{eq:shatter}, we know that
\begin{align*}
\Err_\delta\p{h^*_\delta} = \Err_\text{min} + \, \oo\p{\frac{1}{\sqrt{\lambda}}},
\end{align*}
and so the generalization error $\teta$ under the dropout measure satisfies
$$ \Delta = \teta + \oo\p{\frac{1}{\sqrt{\lambda}}}. $$
Using \eqref{eq:mintau}, we see that
$$ \tvarepsilon_\tau \leq {\Delta}\big/\p{2\,\alpha\, p_\text{min}} $$
for each $\tau$, and so
$$ \tvarepsilon_\tau = \oo\p{\teta + \frac{1}{\sqrt{\lambda}}} $$
uniformly in $\tau$.
Thus, given the bound \eqref{eq:well-conditioned}, we conclude using \eqref{eq:err2} that
\begin{align*}
&\varepsilon_\tau
=  \oo \p{\p{\teta+ {{\lambda}^{-\frac{1 - \delta}{2}}}}^\frac{1}{1 - \delta} \, \max\left\{1, \, \sqrt{-\log\p{\teta}}^{\frac{\delta}{1 - \delta}}\right\} }
\end{align*}
for each topic $\tau$, and so
\begin{align}
\label{eq:droperr2}
\eta
& =\Err\p{\hath_\delta} - \Err\p{h^*_\delta} \\
\notag
&\ \ =  \oo \p{\p{\teta+ {{\lambda}^{-\frac{1 - \delta}{2}}}}^\frac{1}{1 - \delta}  \, \max\left\{1, \, \sqrt{-\log\p{\teta}}^{\frac{\delta}{1 - \delta}}\right\} },
\end{align}
which directly implies \eqref{eq:droperr}.
Note $\eta$ will in general be larger than the $\varepsilon_\tau$, because guessing the optimal label $c_\tau$ is not guaranteed to lead to a correct classification decision (unless each topic is pure, i.e., only represents one class). Here, substracting the optimal error $\Err\p{h^*_\delta}$ allows us to compensate for this effect.
\end{proof}

\begin{proof}[Proof of Corollary \ref{coro:main}]
Here, we prove the more precise bound
\begin{align}
\label{eq:coro:main2}
\Err\p{\hath_\delta} -  \Err\p{h^*_\delta}
= \oo_P\!\p{\!\sqrt{\p{\frac{d}{n} \!+\!  \frac{1}{{\lambda}^{(1 - \delta)}}}\!\max\left\{1, \, \log\p{\frac{n}{d}}\right\}^{1 + \delta}}^\frac{1}{1 - \delta}}.
\end{align}
To do this, we only need to show that
\begin{align}
\label{eq:gen_rate}
\Err_\delta\p{\hath_\delta} - \Err_\delta\p{h^*_\delta}
 = \oo_P\p{\sqrt{\frac{d}{n}\max\left\{1, \, \log\p{\frac{n}{d}}\right\}}},
\end{align}
i.e., that dropout generalizes at the usual rate with respect to the dropout measure. Then, by applying \eqref{eq:droperr2} from the proof of Theorem \ref{theo:main}, we immediately conclude that $\hath_\delta$ converges at the rate given in \eqref{eq:coro:main} under the data-generating measure.

Let $\widehat{\Err}_\delta(h)$ be the average training loss for a classifier $h$.
The empirical loss is unbiased, i.e.,
$$ \EE{\widehat{\Err}_\delta(h)} = \Err_\delta(h). $$
Given this unbiasedness condition, standard methods for establishing rates as in \eqref{eq:gen_rate} \cite{bousquet2004introduction} only require that the loss due to any single training example $(x^{(i)}, \, y^{(i)})$ is bounded, and that the training examples are independent; these conditions are needed for an application of Hoeffding's inequality. Both of these conditions hold here.
\end{proof}

\subsection{Distinct Topics Assumption}

\begin{prop}
\label{prop:shatter}
Let the generative model from Section \ref{sec:gen} hold, and define
$$\pi^{(\tau)} = \lambda^{(\tau)} / \Norm{\lambda^{(\tau)}}_1$$
for the topic-wise word probability vectors and
$$ \Pi = (\pi^{(1)}, \dots, \pi^{(T)}) \in \R^{d \times T} $$
for the induced matrix. Suppose that $\Pi$ has rank $T$, and that the minimum
singular value of $\Pi$ (in absolute value) is bounded below by
\begin{equation}
\label{eq:shatter_hyp}
\left\lvert\sigma_{\min} \p{\Pi}\right\rvert \geq \sqrt{\frac{T}{(1 - \delta)\lambda}} \, \p{1 + \sqrt{\log_+ \frac{\lambda}{2\pi}}},
\end{equation}
where $\log_+$ is the positive part of $\log$. Then \eqref{eq:shatter} holds.
\end{prop}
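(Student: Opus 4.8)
The plan is to establish \eqref{eq:shatter} by exhibiting an \emph{explicit} well-balanced linear classifier $h_w$ with $\Err_\delta(h_w) \le \Err_\text{min} + O(1/\sqrt{\lambda})$; since $h^*_\delta$ is by definition the best well-balanced linear rule under the dropout measure, this gives $\Err_\text{min} \le \Err_\delta(h^*_\delta) \le \Err_\delta(h_w) \le \Err_\text{min} + O(1/\sqrt{\lambda})$. The lower bound $\Err_\delta(h) \ge \Err_\text{min}$ for \emph{every} $h$ is immediate: conditionally on $\tau$ we have $y \perp \tx$ (the model is $y \to \tau \to x \to \tx$), so $\PP{h(\tx) \neq y \cond \tau}$ is a convex combination of $\PP{y = 0 \cond \tau}$ and $\PP{y = 1 \cond \tau}$, hence at least $\PP{y \neq c_\tau \cond \tau}$; averaging over $\tau$ yields $\Err_\text{min}$. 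Moreover, exactly as in the proof of Theorem~\ref{theo:main}, $\Err_\delta(h_w) - \Err_\text{min} = \sum_\tau \PP{\tau}\,\tvarepsilon_\tau\,\lvert 2\PP{y=1\cond\tau}-1\rvert \le \max_\tau \tvarepsilon_\tau$, so it suffices to construct a well-balanced $w$ with $\tvarepsilon_\tau = O(1/\sqrt{\lambda})$ uniformly in $\tau$.

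For the construction I would take $v \in \R^T$ with coordinates $v_\tau = 2\,\I\{c_\tau = 1\} - 1 \in \{-1,+1\}$ and set $w = \Pi(\Pi^\top \Pi)^{-1} v$, the minimum-$\ell_2$-norm vector with $w^\top \pi^{(\tau)} = v_\tau$ for every topic $\tau$; from the SVD of $\Pi$, $\|w\|_2 \le \|v\|_2 / \lvert\sigma_{\min}(\Pi)\rvert = \sqrt{T}/\lvert\sigma_{\min}(\Pi)\rvert$. Fix $\tau$ and assume WLOG $c_\tau = 1$. Under the dropout measure $\tx_j \sim \Pois((1-\delta)\lambda^{(\tau)}_j)$, so $\tS \eqdef w \cdot \tx$ has mean $\mu_\tau = (1-\delta)\|\lambda^{(\tau)}\|_1 \ge (1-\delta)\lambda > 0$ and variance $\sigma^2_\tau = (1-\delta)\sum_j \lambda^{(\tau)}_j w_j^2 = (1-\delta)\|\lambda^{(\tau)}\|_1 \sum_j \pi^{(\tau)}_j w_j^2$. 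The key elementary fact is $1 \le \sum_j \pi^{(\tau)}_j w_j^2 \le \|w\|_2^2$: the left inequality is Cauchy--Schwarz applied to $1 = (\sum_j \pi^{(\tau)}_j w_j)^2 \le (\sum_j \pi^{(\tau)}_j)(\sum_j \pi^{(\tau)}_j w_j^2)$, the right one uses $\sum_j \pi^{(\tau)}_j = 1$. Hence the standardized gap satisfies $\mu_\tau/\sigma_\tau = \sqrt{(1-\delta)\|\lambda^{(\tau)}\|_1 / \sum_j \pi^{(\tau)}_j w_j^2} \ge \sqrt{(1-\delta)\lambda\,\sigma_{\min}(\Pi)^2 / T} \ge 1 + \sqrt{\log_+(\lambda/2\pi)}$, the last step being precisely hypothesis \eqref{eq:shatter_hyp}; and $w$ is well-balanced, since $(\max_j w_j^2)\|\lambda^{(\tau)}\|_1 / \sum_j \lambda^{(\tau)}_j w_j^2 = \max_j w_j^2 / \sum_j \pi^{(\tau)}_j w_j^2 \le \|w\|_2^2 \le T/\sigma_{\min}(\Pi)^2 < \infty$.

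Finally I would invoke Lemma~\ref{lemm:berry-esseen} conditionally on $\tau$: $\tvarepsilon_\tau = F_{\tS}(0) \le \Phi(-\mu_\tau/\sigma_\tau) + \CBE\sqrt{\Psi_\tau}$ with $\Psi_\tau = \max_j w_j^2 / ((1-\delta)\sum_j \lambda^{(\tau)}_j w_j^2)$. For the Gaussian term, write $t_0 = 1 + \sqrt{\log_+(\lambda/2\pi)} \ge 1$; the tail bound \eqref{eq:gauss_tail} gives $\Phi(-t_0) < (t_0\sqrt{2\pi})^{-1} e^{-t_0^2/2} \le (2\pi)^{-1/2}e^{-t_0^2/2}$, and since $t_0^2 \ge 1 + \log_+(\lambda/2\pi)$ a short case split on $\lambda \lessgtr 2\pi$ shows this is at most $1/\sqrt{\lambda}$. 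For the correction term, $\Psi_\tau \le \max_j w_j^2 / ((1-\delta)\|\lambda^{(\tau)}\|_1) \le \|w\|_2^2/((1-\delta)\lambda) \le T/((1-\delta)\lambda\,\sigma_{\min}(\Pi)^2) = O(1/\lambda)$, so $\tvarepsilon_\tau = O(1/\sqrt{\lambda})$ uniformly in $\tau$, which closes the argument. The step I expect to be most delicate is the control of this Berry--Esseen correction: the clean estimate $\CBE\sqrt{\Psi_\tau} = O(1/\sqrt{\lambda})$ uses that $T/\sigma_{\min}(\Pi)^2$ — equivalently, the well-balancedness constant of the constructed $w$ — is a fixed constant of the problem rather than something tracked against \eqref{eq:shatter_hyp}, since a purely mechanical substitution degrades the bound to $O(1/\sqrt{\log\lambda})$; obtaining the stated $1/\sqrt{\lambda}$ either requires this bookkeeping convention or a sharper direct Poisson left-tail estimate in place of Lemma~\ref{lemm:berry-esseen} when $\Psi_\tau$ is itself not small. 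The remaining ingredients — the minimum-norm interpolation, the Cauchy--Schwarz lower bound, and the Gaussian tail computation — are routine.
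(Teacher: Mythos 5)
Your proof is correct and follows essentially the same route as the paper's: a minimum-norm separator built from $\Pi(\Pi^\top \Pi)^{-1}$ whose margin against the topic centers $\pi^{(\tau)}$ is controlled by $\sigma_{\min}(\Pi)$, followed by a per-topic application of Lemma~\ref{lemm:berry-esseen} and the Gaussian tail bound \eqref{eq:gauss_tail}. The delicate point you flag --- that the Berry--Esseen correction is only $O(1/\sqrt{\lambda})$ if the well-balancedness constant of the constructed $w$ is treated as a fixed constant rather than read off mechanically from \eqref{eq:shatter_hyp} --- is real, and the paper handles it exactly as you suggest, by invoking the well-balanced condition \eqref{eq:well-conditioned} as a standing assumption when applying Lemma~\ref{lemm:berry-esseen}.
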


\begin{proof}

Our proof has two parts. We begin by showing that, given \eqref{eq:shatter_hyp},
there is a vector $w$ with $\Norm{w}_2 \leq 1$ such that
\begin{align}
\label{eq:margin}
\I\left\{w \cdot \pi^{(\tau)} > 0\right\} = c_\tau, \eqand
\left|
w \cdot \pi^{(\tau)}
\right|
\geq -\frac{1}{\sqrt{(1 - \delta)\lambda}} \, \Phi^{-1}\p{\frac{1}{\sqrt{\lambda}}}
\end{align}
for all topics $\tau$; in other words, the topic centers can be separated with a large margin.
After that, we show that \eqref{eq:margin} implies \eqref{eq:shatter}.

We can re-write the condition \eqref{eq:margin} as
$$ \min\left\{\Norm{w}_2 : c_\tau w \cdot \pi^{(\tau)} \geq 1 \text{ for all } \tau\right\} \leq \p{ -\frac{1}{\sqrt{(1 - \delta)\lambda}} \, \Phi^{-1}\p{\frac{1}{\sqrt{\lambda}}}}^{-1}, $$
or equivalently that
$$ \min\left\{\Norm{w}_2 : S \, \Pi^\top w \geq 1 \right\} \leq \p{ -\frac{1}{\sqrt{(1 - \delta)\lambda}} \, \Phi^{-1}\p{\frac{1}{\sqrt{\lambda}}}}^{-1} $$
where $S = \diag(c_\tau)$ is a diagonal matrix of class signs.
Now, assuming that $\rank(\Pi) \geq T$, we can verify that
\begin{align*}
\min\left\{\Norm{w}_2 :  S \, \Pi^\top w \geq 1 \right\}
&= \min\left\{\sqrt{z^\top \p{\Pi^\top S^2 \Pi}^{-1} z} : z \geq 1 \right\} \\
&\leq \sqrt{1^\top \p{\Pi^\top \Pi}^{-1} 1} \\
&\leq \left\lvert\sigma_{\min} \p{\Pi}\right\rvert^{-1} \sqrt{T} \\
&\leq \p{ \frac{1}{\sqrt{(1 - \delta)\lambda}} \p{1 + \sqrt{\log_+\frac{\lambda}{2\pi}}}}^{-1},
\end{align*}
where the last line followed by hypothesis. Now, by \eqref{eq:gauss_tail}
$$ \Phi\p{-\p{1 + \sqrt{\log_+\frac{\lambda}{2\pi}}}} \leq \frac{1}{\sqrt{2\pi}} \exp\p{-\frac{1}{2} \log\frac{\lambda}{2\pi}} = \frac{1}{\sqrt{\lambda}}. $$
Because $\Phi^{-1}$ is monotone increasing, this implies that
$$ \p{1 + \sqrt{\log_+\frac{\lambda}{2\pi}}}^{-1} \leq \p{-\Phi^{-1}\p{\frac{1}{\sqrt{\lambda}}}}^{-1}, $$
and so \eqref{eq:margin} holds.

Now, taking \eqref{eq:margin} as given, it suffices to check that
the sub-optimal prediction rate is $\oo\p{1/\sqrt{\lambda}}$
uniformly for each $\tau$. Focusing now on a single topic $\tau$, suppose without
loss of generality that $c_\tau = 1$. We thus need to show that
$$ \PP{w \cdot \tx \leq 0} = \oo\p{\frac{1}{\sqrt{\lambda}}}, $$
where $\tx$ is a feature vector thinned by dropout.
By Lemma \ref{lemm:berry-esseen} together with \eqref{eq:well-conditioned}, we
know that
$$ \PP{w \cdot \tx \leq 0} \leq \Phi\p{-\frac{\EE{w \cdot \tx}}{\sqrt{\Var{w \cdot \tx}}}} + \oo\p{\frac{1}{\sqrt{\lambda}}}. $$
By hypothesis,
$$ \EE{w \cdot \tx} \geq - \sqrt{(1 - \delta) \lambda^{(\tau)}} \Phi^{-1}\p{\frac{1}{\sqrt{\lambda}}}, $$
and we can check that
$$ \Var{w \cdot \tx} = (1 - \delta) \sum_{j = 1}^d w_j^2 \lambda^{(\tau)}_j \leq (1 - \delta) \lambda^{(\tau)} $$
because $\Norm{w}_2 \leq 1$. Thus,
$$ \Phi\p{-\frac{\EE{w \cdot \tx}}{\sqrt{\Var{w \cdot \tx}}}} \leq \Phi\p{\Phi^{-1}\p{\frac{1}{\sqrt{\lambda}}}} = \frac{1}{\sqrt{\lambda}}, $$
and \eqref{eq:shatter} holds.
\end{proof}

\subsection{Dropout Preserves the Bayes Decision Boundary}

\begin{proof}[Proof of Proposition \ref{prop:cond}]
Another way to view our topic model is as follows.
For each topic $\tau$, define a distribution over words $\pi^{(\tau)} \in \Delta^{d-1}$:
$\pi^{(\tau)} \eqdef {\lambda^{(\tau)}} / {\|\lambda^{(\tau)}\|_1}.$
The generative model is equivalent to first drawing the length of the document and then
drawing the words from a multinomial:
\begin{align}
\label{eq:multinom_proc}
L_i \sim \Pois\p{\|\lambda^{(\tau)}\|_1}, \eqand
x^{(i)}  \cond \taui, \, L_{i} \sim \Mult\p{\pi^{(\taui)}, \, L_{i}}.
\end{align}

  Now, write the multinomial probability mass function \eqref{eq:multinom_proc} as
  \begin{equation*}
    \PP[m]{x;\, \pi, \, L} = \frac{L!}{x_1!\cdots x_p!} \, \pi_1^{x_1} \cdots \pi_d^{x_d}
  \end{equation*}
For each label $c$, define $\Pi_c$ to be the distribution over the probability vectors
induced by the distribution over topics.  Note that we could have an infinite
number of topics.
  By Bayes rule, 
  \begin{align*}
    &\PP{x=v\cond y = c}  = \PP{L = \sum_{j = 1}^d v_j} \cdot \int \PP[m]{v;\, \pi , \, \sum_{j = 1}^d v_j}\,d\Pi_c(\pi), \eqand \\
    &\PP{y = c\cond x = v} = \frac{\PP{c} \int \PP[m]{v; \, \pi, \sum_{j = 1}^d v_j} \, d\Pi_c(\pi)}
       {\sum_{c'}\PP{c'}\int \PP[m]{v; \, \pi, \sum_{j = 1}^d v_j} \, d\Pi_{c'}(\pi)}.
  \end{align*}
  The key part is that the distribution of $L$ doesn't depend on $c$,
  so that when we condition on $x = v$, it cancels.
  As for the joint distribution of $(\widetilde x, y)$, note that,
  given $\pi$ and $\tL = \sum_{j = 1}^d \tx_j$, $\tx$
  is conditionally $\Mult(\pi,\tL)$.  So then
  \begin{align*}
    &\PP{\tx=v\cond y = c} = \PP{\tL = \sum_{j = 1}^d v_j} \cdot \int \PP[m]{v;\, \pi , \, \sum_{j = 1}^d v_j}\,d\Pi_c(\pi), \eqand \\
    &\PP{y = c\cond \tx = v}  = \frac{\PP{c} \int \PP[m]{v; \, \pi, \sum_{j = 1}^d v_j} \, d\Pi_c(\pi)}
       {\sum_{c'}\PP{c'}\int \PP[m]{v; \, \pi, \sum_{j = 1}^d v_j} \, d\Pi_{c'}(\pi)}.
  \end{align*}
  In both cases, $L$ and $\tilde L$ don't depend on the topic,
  and when we condition on $x$ and $\tx$, we get the same distribution over $y$.
\end{proof}

\end{appendix}

\end{document}